% CVPR 2024 Paper Template; see https://github.com/cvpr-org/author-kit

\documentclass[10pt,twocolumn,letterpaper]{article}

%%%%%%%%% PAPER TYPE  - PLEASE UPDATE FOR FINAL VERSION
% \usepackage{cvpr}              % To produce the CAMERA-READY version
% \usepackage[review]{cvpr}      % To produce the REVIEW version
\usepackage[pagenumbers]{cvpr} % To force page numbers, e.g. for an arXiv version

% Import additional packages in the preamble file, before hyperref
%
% --- inline annotations
%
\usepackage[dvipsnames]{xcolor}

% --- disable by uncommenting  
% \renewcommand{\TODO}[1]{}
% \renewcommand{\todo}[1]{#1}

% It is strongly recommended to use hyperref, especially for the review version.
% hyperref with option pagebackref eases the reviewers' job.
% Please disable hyperref *only* if you encounter grave issues, 
% e.g. with the file validation for the camera-ready version.
%
% If you comment hyperref and then uncomment it, you should delete *.aux before re-running LaTeX.
% (Or just hit 'q' on the first LaTeX run, let it finish, and you should be clear).
\definecolor{cvprblue}{rgb}{0.21,0.49,0.74}
\usepackage[pagebackref,breaklinks,colorlinks,citecolor=cvprblue]{hyperref}
\usepackage{multirow}
\usepackage{multicol}
\usepackage{threeparttable}

\usepackage[accsupp]{axessibility}

\usepackage{amsthm}
\newtheorem{theorem}{Theorem}
\newtheorem{definition}{Definition}

%%%%%%%%% PAPER ID  - PLEASE UPDATE
 % *** Enter the Paper ID here

%%%%%%%%% TITLE - PLEASE UPDATE
\title{SpikingResformer: Bridging ResNet and Vision Transformer in Spiking Neural Networks}

%%%%%%%%% AUTHORS - PLEASE UPDATE
\author{Xinyu Shi\textsuperscript{1,2}, Zecheng Hao\textsuperscript{2}, Zhaofei Yu\textsuperscript{1,2}\thanks{Corresponding author: yuzf12@pku.edu.cn}\\
% \thanks{ Use footnote for providing further information
% about author (webpage, alternative address)---\emph{not} for acknowledging
% funding agencies.  Funding acknowledgements go at the end of the paper.} \\
\textsuperscript{1} Institute for Artificial Intelligence, Peking University\\
\textsuperscript{2} School of Computer Science, Peking University\\
%{\tt\small\{xyshi,haozecheng,yuzf12\}@pku.edu.cn} \\
}

\begin{document}
\maketitle
\begin{abstract}
The remarkable success of Vision Transformers in Artificial Neural Networks (ANNs) has led to a growing interest in incorporating the self-attention mechanism and transformer-based architecture into Spiking Neural Networks (SNNs).
While existing methods propose spiking self-attention mechanisms that are compatible with SNNs, they lack reasonable scaling methods, and the overall architectures proposed by these methods suffer from a bottleneck in effectively extracting local features.
To address these challenges, we propose a novel spiking self-attention mechanism named Dual Spike Self-Attention (DSSA) with a reasonable scaling method.
Based on DSSA, we propose a novel spiking Vision Transformer architecture called SpikingResformer, which combines the ResNet-based multi-stage architecture with our proposed DSSA to improve both performance and energy efficiency while reducing parameters.
Experimental results show that SpikingResformer achieves higher accuracy with fewer parameters and lower energy consumption than other spiking Vision Transformer counterparts.
Notably, our SpikingResformer-L achieves 79.40\% top-1 accuracy on ImageNet with 4 time-steps, which is the state-of-the-art result in the SNN field.
Codes are available at \href{https://github.com/xyshi2000/SpikingResformer}{https://github.com/xyshi2000/SpikingResformer}
\end{abstract}
\section{Introduction}

Spiking Neural Networks (SNNs), considered as the third generation of Artificial Neural Networks (ANNs)~\cite{maass1997networks}, have garnered significant attention due to their notable advantages such as low power consumption, biological plausibility, and event-driven characteristics that are compatible with neuromorphic hardware.
In comparison to ANNs, SNNs exhibit an energy-saving advantage when deployed on neuromorphic hardware~\cite{furber2014spinnaker, merolla2014million,pei2019towards}, and have become popular in the field of neuromorphic computing in recent years~\cite{schuman2017survey}.
However, the performance of existing SNNs still lags behind that of ANNs, particularly on challenging vision tasks, which limits further application of SNNs.

\begin{figure}[!t]
    \centering
    \includegraphics[width=\linewidth]{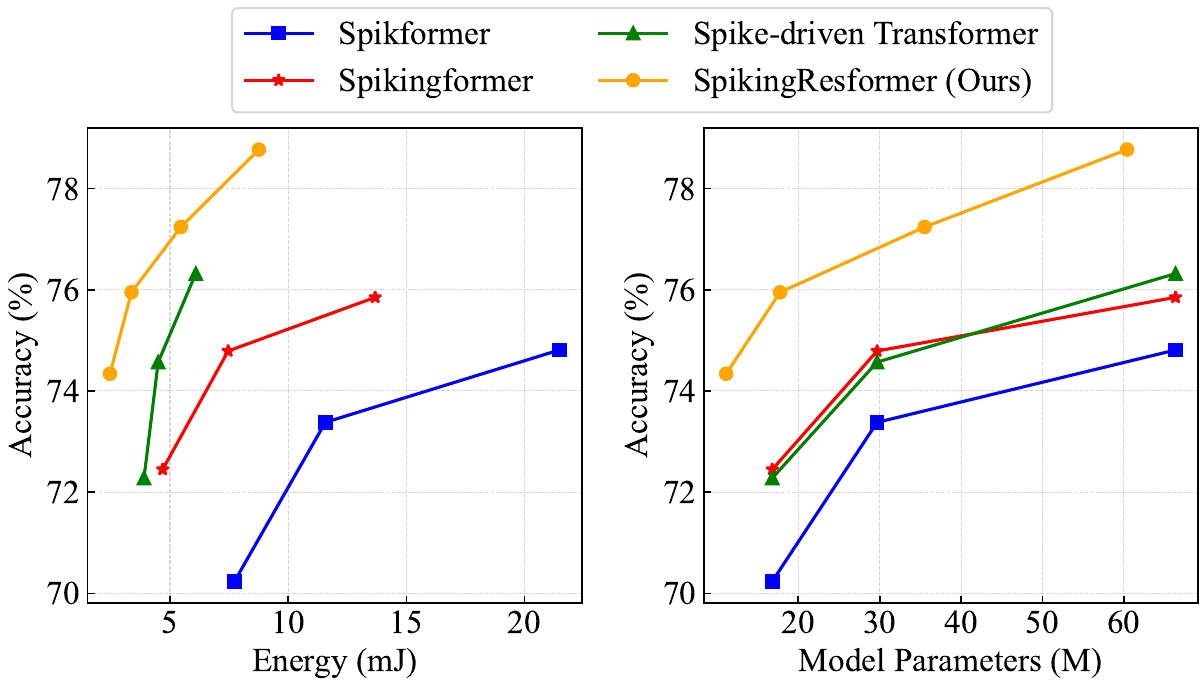}
    \caption{Comparison of Top-1 accuracy on ImageNet with respect to energy consumption per image for inference (left) and the number of parameters (right). The input size is 224$\times$224.}
    \label{fig:comparison}
\end{figure}

Researchers make great efforts to enhance the performance of SNNs.
Recently, inspired by the remarkable achievements of vision transformers~\cite{dosovitskiy2021an,liu2021swin} in ANNs, some attempts have been made to introduce transformer-based architecture into SNNs.
The main challenge in incorporating transformer structures into SNNs lies in designing a self-attention mechanism suitable for SNNs.
The vanilla self-attention mechanism in ANNs relies on float-point matrix multiplication and softmax operations. However,
these operations involve float-point multiplication, division, and exponentiation, which do not comply with the spike-driven nature of SNNs.
Moreover, commonly used components in ANN Transformers, such as layer normalization (LN) and GELU activation, are not directly applicable in SNNs.
Therefore, the introduction of transformer architectures into SNNs necessitates circumventing these operations and catering to the unique requirements of SNNs.

Some work has attempted to partially preserve floating-point operations~\cite{wang2023sstformer}, or run an SNN and an ANN Transformer in parallel~\cite{zou2023event}.
Although these approaches enhance performance, they do not fully address the incompatibility between vanilla self-attention and SNNs.
There are also works that propose spiking self-attention mechanisms and spiking Vision Transformer architectures, which are entirely based on synaptic operations~\cite{zhou2023spikformer} and fully spike-driven~\cite{zhou2023spikingformer,yao2023spike}.
These works completely resolve the incompatibility of the self-attention mechanism with SNNs, significantly outperforming existing spiking convolutional neural networks. However, these approaches have certain limitations.
Specifically, the spiking transformer architectures of these methods exhibit a bottleneck in extracting local features. They employ a shallow convolutional network before the Transformer encoder to extract local features and reduce the size of feature maps. However, the effectiveness of such a shallow network is limited compared to Transformer encoders.
Replacing this shallow network with a Transformer encoder is not feasible.
This is because their spiking self-attention mechanisms lack reasonable scaling methods, making them suitable only for small feature maps. Designing proper scaling factors for these mechanisms is challenging as the input currents to neurons, which generate self-attention, do not possess simple mean and variance forms. Thus, there is a pressing need to design a new spiking self-attention mechanism with a reasonable scaling method that can effectively handle large feature maps.

To address these problems, we propose a novel spiking self-attention mechanism, named Dual Spike Self-Attention (DSSA).
It produces spiking self-attention via Dual Spike Transformation, which is fully spike-driven,  compatible with SNNs, and eliminates the need for direct spike multiplications.
In addition, we detail the scaling method in DSSA, enabling it to adapt to adapt to feature maps of arbitrary scales.
Building upon DSSA, we propose SpikingResformer, a novel spiking Vision Transformer architecture.
This architecture combines the ResNet-based multi-stage design with our proposed spiking self-attention mechanism.
Experimental results show that our proposed SpikingResformer significantly outperforms the performance of existing spiking Vision Transformers with fewer parameters and less energy consumption.
The main contributions of this paper can be summarized as follows:
\begin{itemize}
\item We propose the Dual Spike Self-Attention (DSSA), a novel spiking self-attention mechanism.
It produces spiking self-attention via Dual Spike Transformation, which is fully spike-driven and compatible with SNNs.
\item We detail the scaling factors employed in DSSA, enabling DSSA to handle feature maps of arbitrary scales.
\item We propose the SpikingResformer architecture, which combines the ResNet-based multi-stage architecture with our proposed DSSA.
\item Experimental results show that the proposed SpikingResformer significantly outperforms other spiking Vision Transformer counterparts with fewer parameters and lower energy consumption. Notably, our SpikingResformer-L achieves up to 79.40\% top-1 accuracy on ImageNet.
\end{itemize}

\section{Related Work}

\noindent
\textbf{Spiking Convolutional Neural Networks.}
Spiking convolutional neural networks (SCNNs) have been extensively developed due to the remarkable success of surrogate gradient learning~\cite{zenke2021remarkable,neftci2019surrogate} and are widely used in handling challenging vision tasks, including object recognition~\cite{amir2017low,lan2023efficient}, detection~\cite{kim2020spiking,su2023deep}%,cordone2022object}%,wang2020deep%kim2021revisiting,
, and segmentation~\cite{kirkland2020spikeseg,patel2021spiking}.
To improve the performance of SCNNs on these challenging tasks, researchers have dedicated great efforts to exploring training methods~\cite{lee2020enabling,deng2022temporal,wei2023temporal,guo2023membrane,meng2023towards,zhu2022training,zhu2023exploring,xu2023constructing} and ANN-to-SNN conversion techniques~\cite{sengupta2019going,hu2021spiking,hao2023reducing,li2023unleashing,li2021free,jiang2023unified,wu2021progressive,hu2023fast,bu2023optimal,bu2022optimized,ding2021optimal,deng2021optimal}.
Moreover, many deep spiking convolutional architectures~\cite{sengupta2019going,lee2020enabling,zheng2021going,hu2021spiking,fang2021deep} have been proposed to achieve high performance.
The success of SpikingVGG~\cite{sengupta2019going,lee2020enabling} demonstrates that SCNNs can achieve comparable performance to ANNs in recognition tasks.
SpikingResNet~\cite{zheng2021going,hu2021spiking} further explores SCNNs with residual structure and achieves deeper SCNN with ResNet-based architecture. Moreover, SEW ResNet~\cite{fang2021deep} meticulously analyzes the identity mapping in directly-trained spiking residual networks and successfully trains a 152-layer SCNN directly.
These architectures leverage large-scale SNNs with numerous layers and demonstrate superior performance on various tasks.

\noindent
\textbf{Spiking Vision Transformers.}
Spikformer~\cite{zhou2023spikformer} is the first directly-trained spiking vision transformer with a pure SNN architecture.
It introduces a spiking self-attention mechanism that eliminates multiplication by activating Query, Key, and Value with spiking neurons and replacing softmax with spiking neurons.
In addition, it replaces layer normalization and GELU activation in the Transformer with batch normalization and spiking neurons.
Based on Spikformer, Spikingformer~\cite{zhou2023spikingformer} achieves a purely spike-driven Vision Transformer by modifying the residual connection paradigm.
Spike-driven Transformer~\cite{yao2023spike} proposes a spike-driven self-attention mechanism with linear complexity, effectively reducing energy consumption.
However, all of these efforts employ a shallow convolutional network to pre-extract local information to form a sequence of patches and lack proper scaling methods.

\section{Preliminary}

%\subsection{Spiking Neuron Model}
We describe the dynamics of the Leaky Integrate-and-Fire (LIF) neuron used in this paper by the following discrete-time model:
\begin{align}
v_i[t] &= u_i[t] + \frac1\tau (I_i[t]-(u_i[t]-u_{\rm rest})),\label{eq:spiking neuron model a}\\
s_i[t] &= H(v_i[t] - u_{\rm th}),\label{eq:spiking neuron model b}\\
u_i[t+1] &= s_i[t] u_{\rm rest}+(1-s_i[t])v_i[t].\label{eq:spiking neuron model c}
\end{align}
Eq.~\eqref{eq:spiking neuron model a} describes the charging process.
Here $u_i[t]$ and $v_i[t]$ denote the membrane potential of $i$-th postsynaptic neuron before and after charging. $\tau$ is the membrane time constant. $I_i[t]$ denotes the input current.
In general, $I_i[t] = \sum_j w_{i,j}s_j[t]$, where $s_j[t]\in\{0,1\}$ represents the spike output of the $j$-th presynaptic neuron at time-step $t$, and $w_{i,j}$ represents the weight of the corresponding synaptic connection from neuron $j$ to neuron $i$.
Eq.~\eqref{eq:spiking neuron model b} describes the firing process,
where $H(\cdot)$ is the Heaviside function, $s_i[t]\in\{0,1\}$ is the spike output of the spiking neuron, $u_{\rm th}$ denotes the firing threshold.
Eq.~\eqref{eq:spiking neuron model c} describes the resetting process, with
$u_{\rm rest}$ denoting the resting potential.

For the sake of simplicity and clarity in subsequent sections, we represent the spiking neuron model as follows:
\begin{equation}
    {\bf S} = {\rm SN}({\bf I}),
\end{equation}
where ${\rm SN}(\cdot)$ denotes the spiking neuron layer, omitting the dynamic processes within the neuron, ${\bf I}\in \mathbb{R}^{T\times n}$ is the input current, where $T$ is the time step and $n$ is the number of neurons, ${\bf S}\in \{0,1\}^{T\times n}$ is the corresponding spike output.

\section{Dual Spike Self-Attention}
\label{sec:DSSA}

This section first revisits the vanilla self-attention (VSA) mechanism commonly used in ANNs and analyzes why VSA is not suitable for SNNs. Then, we propose dual spike self-attention (DSSA), specifically designed for compatibility. We further discuss the significance of the scaling factor in DSSA and the spike-driven characteristic of DSSA.

\subsection{Vanilla Self-Attention}
The vanilla self-attention in Transformer~\cite{vaswani2017attention} can be formulated as follows:
\begin{align}
&{\bf Q}={\bf X}{\bf W}_Q,\;{\bf K}={\bf X}{\bf W}_K,\;{\bf V}={\bf X}{\bf W}_V,\;\\
&\mathrm{Attention}({\bf Q}, {\bf K}, {\bf V})
= \mathrm{Softmax}\left(\frac{{\bf QK}^{\rm T}}{\sqrt{d}}\right){\bf V}
\label{eq:vanilla attention}
\end{align}
Here ${\bf Q}, {\bf K}, {\bf V}\in\mathbb{R}^{n\times d}$ denote Query, Key and Value, respectively. $n$ is the number of patches, $d$ is the embedding dimension.
We assume that ${\bf Q}, {\bf K}, {\bf V}$ have the same embedding dimension.
${\bf X}\in\mathbb{R}^{n\times d}$ is the input of self-attention block, ${\bf W}_Q, {\bf W}_K, {\bf W}_V\in\mathbb{R}^{d\times d}$ are the weights of the linear transformations corresponding to ${\bf Q}, {\bf K}, {\bf V}$, respectively.

The vanilla self-attention commonly used in ANNs is not suitable for SNNs due to the following two types of operations involved:
1) the float-point matrix multiplication of $\bf Q$ and $\bf K$, as well as between the attention map and $\bf V$;
2) the softmax function, which contains exponentiation and division operations.
These operations rely on float-point multiplication, division, and exponentiation operations, which are not compatible with the restrictions of SNNs.

\begin{figure*}[!t]
    \centering
    \includegraphics[width=\linewidth]{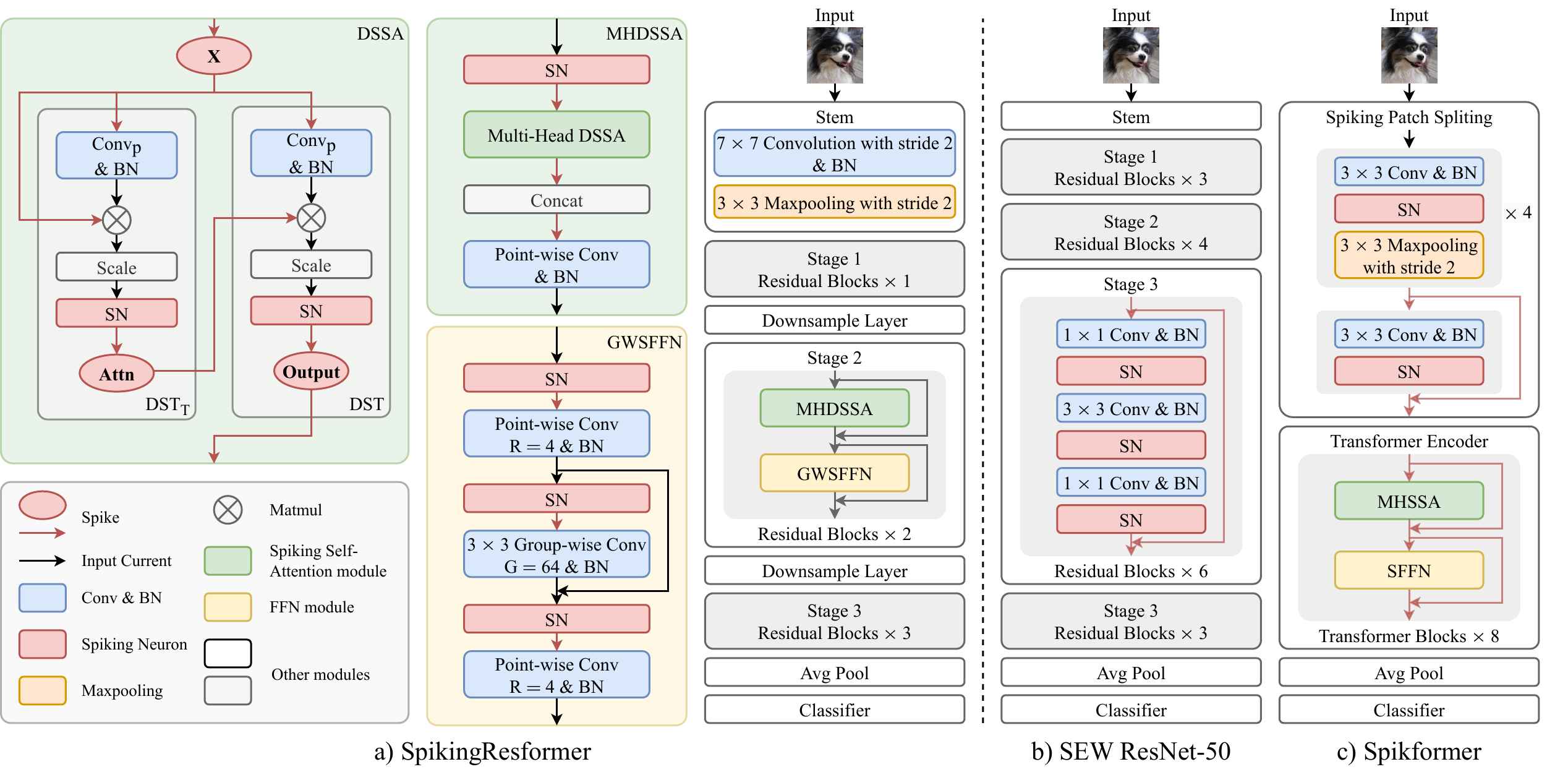}
    \caption{{\bf Left}: Architecture of SpikingResformer and components including Dual Spike Self-Attention (DSSA), Multi-Head DSSA (MHDSSA), and Group-Wise Spiking Feed-Forward Network (GWSFFN). {\bf Right}: Architecture of SEW ResNet-50 and Spikformer.}
    \label{fig:overview}
\end{figure*}

\subsection{Dual Spike Self-Attention}
To introduce the self-attention mechanism into SNNs and efficiently handle the multi-scale feature maps, we propose a novel spiking self-attention mechanism, named Dual Spike Self-Attention (DSSA).
DSSA only utilizes Dual Spike Transformation (DST), thereby eliminating the need for the float-point matrix multiplication and softmax function.
We first define the DST as follows:
\begin{align}
{\rm DST}({\bf X}, {\bf Y}; f(\cdot))&={\bf X}f({\bf Y})={\bf XYW},\label{eq:DST}\\
{\rm DST_T}({\bf X}, {\bf Y}; f(\cdot))&={\bf X}f({\bf Y})^{\rm T}={\bf XW}^{\rm T}{\bf Y}^{\rm T}.
\label{eq:DSTT}
\end{align}
In Eq.~\eqref{eq:DST}, ${\bf X}\in\{0,1\}^{T\times p\times m}$ and ${\bf Y}\in\{0,1\}^{T\times m\times q}$ represent the dual spike inputs. $T$ is the time steps, $p$, $m$, $q$ denote arbitrary dimensions.
Here $f(\cdot)$ is a generalized linear transformation on ${\bf Y}$, with ${\bf W}\in\mathbb{R}^{q\times q}$ denoting its weight matrix. It represents any operation that can be equated to a linear transformation, including convolution, batch normalization (ignoring bias), etc.
A detailed discussion on the equivalence of convolution operations to linear transformations can be found in the supplementary.
Similarly, in Eq.~\eqref{eq:DSTT}, we have ${\bf Y}\in\{0,1\}^{T\times q\times m}$, ${\bf W}\in\mathbb{R}^{m\times m}$.
Since $\bf X$ and $\bf Y$ are both spike matrices, all matrix multiplications are equivalent to the summation of weights. Consequently, the DST avoids floating-point multiplication, making it compatible with SNNs.
Further discussion of the compatibility and the spike-driven characteristic of DST can be found in Sec.~\ref{sec:spike-driven}.
Based on DST, the attention map in DSSA can be formulated as follows:
\begin{align}
{\rm AttnMap}({\bf X})&={\rm SN}({\rm DST_T}({\bf X}, {\bf X};f(\cdot))*c_1),\\
f({\bf X})&={\rm BN}({\rm Conv_p}({\bf X})),
\end{align}
where ${\bf X}\in \{0,1\}^{T\times HW\times d}$ is the spike input, with $H$ and $W$ denoting the spatial height and width of the input, $d$ denoting the embedding dimension. $\rm BN(\cdot)$ refers to the batch normalization layer, $\rm Conv_p(\cdot)$ denotes a $p\times p$ convolution with a stride of $p$, and $c_1$ is the scaling factor.
Since the convolution operation is equivalent to a generalized linear transformation, and batch normalization can be absorbed into the convolution (ignoring bias), ${\rm BN}({\rm Conv_p}(\cdot))$ can be viewed as a generalized linear transformation.
Here we use the $p\times p$ convolution with a stride of $p$ to reduce the spatial size to handle the multi-scale feature map and reduce the computational overhead.
In DSSA, there is no need for the softmax function since the spiking neuron layer inherently generates a binary attention map composed of spikes.
Each spike $s_{ij}$ in this spiking attention map signifies attention between patch $i$ and patch $j$.
We believe that such a spiking attention map is more interpretable compared to the attention map in ANN activated with softmax. With the spiking attention map, the DSSA can be formulated as follows:
\begin{equation}
{\rm DSSA}({\bf X})={\rm SN}({\rm DST}({\rm AttnMap}({\bf X}),{\bf X};f(\cdot))*c_2),
\label{eq:DSSA}
\end{equation}
where $c_2$ is the second scaling factor. Since the form of DSSA is quite different from VSA and existing spiking self-attention mechanisms, we further discuss how DSSA achieves self-attention in the supplementary.

\subsection{Scaling Factors in DSSA}
In the vanilla self-attention mechanism~\cite{vaswani2017attention}, the product of matrices $\mathbf{Q}$ and $\mathbf{K}$ in Eq.~\eqref{eq:vanilla attention} is scaled by a factor of $1/\sqrt{d}$ before applying the softmax operation.
This scaling is necessary because the magnitude of $\mathbf{QK}^\mathsf{T}$ grows with the embedding dimension $d_k$, which can result in gradient vanishing issues after the softmax operation.
Formally, assume that all the elements in $\bf Q$ and $\bf K$ are independent random variables with a  mean of 0 and a variance of 1, then each element in their product ${\bf QK}^{\rm T}$ has mean 0 and variance $d$.
By multiplying the product with a factor of $1/{\sqrt{d}}$, the variance of the product is scaled to 1.

While DSSA does not employ the softmax function, the surrogate gradient also suffers gradient vanishes without scaling.
However, directly using the scaling factor of VSA is not feasible. Due to the spike-based nature of the input in DST and the attention map in DSSA, we cannot assume that they possess a mean of 0 and a variance of 1.
Therefore, the scaling factor values in DSSA should be different from those used in VSA. In the following theorem, we present the mean and variance of DST.
\begin{theorem}[Mean and variance of DST]
Given spike input ${\bf X}\in\{0,1\}^{T\times p\times m}$, ${\bf Y}\in\{0,1\}^{T\times m\times q}$ and linear transformation $f(\cdot)$ with weight matrix ${\bf W}\in \mathbb{R}^{q\times q}$, ${\bf I}\in \mathbb{R}^{T\times p\times q}$ is the output of DST, ${\bf I}={\rm DST}({\bf X},{\bf Y};f(\cdot))$. Assume that all elements in $\bf X$ and $\bf Y$ are independent random variables, $x_{i_x,j_x}[t]$ in $\bf X$ subject to Bernoulli distribution $x_{i_x,j_x}[t]\sim B(f_x)$, and the output of linear transformation $f({\bf Y})$ has mean 0 and variance 1, we have ${\rm E}(I_{i_I,j_I}[t])=0$, ${\rm Var}(I_{i_I,j_I}[t]) = f_x m$.
Similarly, for ${\bf I}={\rm DST_T}({\bf X},{\bf Y};f(\cdot))$ and ${\bf Y}\in\{0,1\}^{T\times q\times m}$, ${\bf W}\in \mathbb{R}^{m\times m}$, we also have ${\rm E}(I_{i_I,j_I}[t])=0$, ${\rm Var}(I_{i_I,j_I}[t]) = f_x m$.
\label{theorem:mean and variance}
\end{theorem}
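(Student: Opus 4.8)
The plan is to fix a time-step $t$ and output coordinates $(i_I, j_I)$ and reduce the claim to a single scalar sum. Writing $z_{k,l}[t]$ for the entries of $f({\bf Y})$, the definition ${\bf I}={\bf X}f({\bf Y})$ expands to $I_{i_I,j_I}[t]=\sum_{k=1}^m x_{i_I,k}[t]\,z_{k,j_I}[t]$. Two facts will be used repeatedly: first, each $x_{i_I,k}[t]$ is independent of every entry of $f({\bf Y})$, because $\bf X$ and $\bf Y$ are independent and $f({\bf Y})$ is a deterministic function of $\bf Y$; second, since the entries of $\bf X$ are $0/1$-valued Bernoulli variables we have $x_{i_I,k}[t]^2=x_{i_I,k}[t]$, so ${\rm E}(x_{i_I,k}[t]^2)={\rm E}(x_{i_I,k}[t])=f_x$.

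The mean follows at once: by linearity and the independence of $x$ from $z$, ${\rm E}(I_{i_I,j_I}[t])=\sum_k {\rm E}(x_{i_I,k}[t])\,{\rm E}(z_{k,j_I}[t])=0$, since each output entry of $f$ has mean $0$. For the variance I would use ${\rm Var}(I_{i_I,j_I}[t])={\rm E}(I_{i_I,j_I}[t]^2)$ and expand the square into a double sum over $k$ and $k'$. The diagonal terms ($k=k'$) factor as ${\rm E}(x_{i_I,k}[t]^2)\,{\rm E}(z_{k,j_I}[t]^2)=f_x\cdot{\rm Var}(z_{k,j_I}[t])=f_x$, and summing the $m$ of them yields the claimed $f_x m$. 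Everything therefore reduces to showing that the off-diagonal terms ($k\neq k'$) contribute nothing.

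This is where the real work lies and where the two statements diverge. In each off-diagonal term the factor ${\rm E}(x_{i_I,k}[t]\,x_{i_I,k'}[t])={f_x}^2$ separates out (using independence of distinct $\bf X$ entries), leaving $f_x^2\,{\rm E}(z_{k,j_I}[t]\,z_{k',j_I}[t])$. For the plain DST the two $z$-factors are built from rows $k$ and $k'$ of $\bf Y$, i.e.\ from disjoint sets of independent entries, so they are independent and their expected product is ${\rm E}(z)\,{\rm E}(z)=0$. For ${\rm DST_T}$ the indexing is different: the entries of $f({\bf Y})^{\rm T}$ appearing in the sum all come from a single row of $\bf Y$, so row-disjointness no longer applies. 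To close this case I would read the hypothesis ``$f({\bf Y})$ has mean $0$ and variance $1$'' in the stronger sense (standard in the VSA scaling argument) that distinct output entries are uncorrelated, forcing ${\rm E}(z_{k,j_I}[t]\,z_{k',j_I}[t])=0$ for $k\neq k'$. With the cross terms gone, both cases collapse to the diagonal sum $f_x m$. The main obstacle is thus not the algebra but making the dependence structure of $f({\bf Y})$ explicit---tracking which output entries share a row of $\bf Y$---and pinning down the uncorrelatedness assumption needed for the transposed variant.
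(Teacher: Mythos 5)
Your proposal is correct and follows the same computational spine as the paper's proof: write $I_{i_I,j_I}[t]=\sum_k x_{i_I,k}[t]z_{k,j_I}[t]$, use ${\rm E}(x)= {\rm E}(x^2)=f_x$ for the Bernoulli entries together with ${\rm E}(z)=0$, ${\rm Var}(z)=1$, and conclude ${\rm E}(I)=0$, ${\rm Var}(I)=f_xm$. Where you genuinely diverge is in how the cross-terms are dispatched. The paper proves two lemmas (moments of products and of sums of \emph{independent} variables) and then asserts that the summands $x_{i,k}z_{\cdot,k}$ are independent across $k$, on the grounds that the relevant entries of $f({\bf Y})$ are built from non-overlapping entries of ${\bf Y}$; it only concedes in a follow-up remark that this disjointness holds for the stride-$p$ convolution and otherwise must be replaced by an outright independence assumption on the $z$'s. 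You instead expand ${\rm E}(I^2)$ directly and isolate precisely what is needed: ${\rm E}(z_{k}z_{k'})=0$ for $k\ne k'$. This buys two things. First, your argument correctly distinguishes the two variants: for ${\rm DST}$ the contracted $z$-entries live in distinct rows of ${\bf Y}$ and independence is genuine, whereas for ${\rm DST_T}$ they share a row of ${\bf Y}$, so an additional hypothesis is unavoidable --- an asymmetry the paper's main proof does not surface (it actually presents the ${\rm DST_T}$ case first with the disjointness claim, which is the case where that claim is weakest). Second, you only require pairwise uncorrelatedness of the outputs of $f$, which is strictly weaker than the full independence the paper falls back on. The one thing to make explicit if you write this up is the factorization ${\rm E}(x_kx_{k'}z_kz_{k'})={\rm E}(x_kx_{k'})\,{\rm E}(z_kz_{k'})$, which needs the stated independence of ${\bf X}$ from ${\bf Y}$ (and hence from $f({\bf Y})$); you invoke this for the mean but should also note it for the second moment.
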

The proof of Theorem~\ref{theorem:mean and variance} can be found in the supplementary.
Accroding to Theorem~\ref{theorem:mean and variance}, we have $c_1=1/\sqrt{f_{X}d}$, $c_2=1/\sqrt{f_{Attn}HW/p^2}$.
Here $f_{X}$ and $f_{Attn}$ are the average firing rate of $\bf X$ and spiking attention map, respectively.

\subsection{Spike-driven Characteristic of DSSA}
\label{sec:spike-driven}
The spike-driven characteristic is important for SNNs, i.e., the computation is sparsely triggered with spikes and requires only synaptic operations.
Previous work has made a great effort in achieving spike-driven Transformers~\cite{zhou2023spikingformer, yao2023spike}.
In this subsection, we delve into the spike-driven characteristic of DSSA and prove that DSSA is spike-driven.
We first give the formal definition of the spike-driven characteristic.
\begin{definition}
A spiking neural network is spike-driven if the input currents of all neurons satisfy the following form:
\begin{equation}
    I_{i}[t] = \sum_j w_{i,j}s_{j}[t] = \sum_{j,s_{j}[t]\ne 0}w_{i,j},
\end{equation}
where $I_{i}[t]$ is the input current of the $i$-th postsynaptic neuron at time step $t$, $s_{j}[t]\in \{0,1\}$ is the spike output of the $j$-th presynaptic neuron, $w_{i,j}$ is the weight of the synaptic connection from neuron $j$ to neuron $i$.
\label{definition:spike-driven}
\end{definition}
This definition reveals the nature of the spike-driven characteristic, i.e., the accumulation of input current is sparsely triggered by spikes emitted from presynaptic neurons.
It is evident that commonly used linear layers and convolution layers satisfy this definition.

The DSSA has only two spiking neuron layers, including the spiking attention map layer and the output layer. Both of these layers receive input currents derived from DST.
Thus, we only need to validate that the DST satisfies the form in Definition~\ref{definition:spike-driven}.
We first validate the $\rm DST_T$ in Eq.~\eqref{eq:DSTT}
\begin{align}
{\bf I} &= {\bf X}{\bf W}^{\rm T}{\bf Y}^{\rm T},\\
I_{i,j}[t] &= \sum_{k=1}^m\sum_{l=1}^m x_{i,k}[t]w_{l,k}y_{j,l}[t]
= \sum_{\underset{(x_{i,k}[t]\land y_{j,l}[t])\ne0}{k,l}} w_{l,k}.
\end{align}
Slightly different from Definition~\ref{definition:spike-driven}, the spike input here is the logical AND of dual spikes. This can be viewed as a synaptic operation requiring dual spikes to trigger, which is the reason it is called dual spike transformation.
Similar to the $\rm DST_T$, the $\rm DST$ in Eq.~\eqref{eq:DST} can also be formulated as:
\begin{align}
{\bf I} &= {\bf X}{\bf Y}{\bf W},\\
I_{i,j}[t] &= \sum_{k=1}^m\sum_{l=1}^q x_{i,k}[t]y_{k,l}[t]w_{l,j}
= \sum_{\underset{(x_{i,k}[t]\land y_{k,l}[t])\ne0}{k,l}} w_{l,j}.
\end{align}
Thus, DSSA is spike-driven.

\begin{table*}[!t]
\centering
\caption{Architectures of SpikingResformer series. The output size corresponds to the input size of 224$\times$224. $D_i$ and $H_i$ are the embedding dimension and number of heads of MHDSSA in stage $i$, respectively. $p_i$ denotes that the MHDSSA in stage $i$ uses $p_i\times p_i$ convolution in DST. $R_i$ and $G_i$ denote the expansion ratio and embedding dimension per group of GWSFFN in stage $i$, respectively.}
\resizebox{\linewidth}{!}
{

% \begin{threeparttable}
\begin{tabular}{ccccccc}
\toprule
Stage&Output Size&Layer Name&SpikingResformer-Ti&SpikingResformer-S&SpikingResformer-M&SpikingResformer-L\\
\midrule
Stem&56$\times$56&Stem&\multicolumn{4}{c}{Conv 7$\times$7, stride 2, Maxpooling 3$\times$3, stride 2}\\
\midrule
\multirow{3}{*}{Stage 1}&\multirow{3}{*}{56$\times$56}&\multirow{2}{*}{MHDSSA}&
\multirow{3}{*}{$\begin{bmatrix}
    D_1=64\\
    H_1=1,p_1=4\\
    R_1=4,G_1=64
\end{bmatrix}\times1$}&
\multirow{3}{*}{$\begin{bmatrix}
    D_1=64\\
    H_1=1,p_1=4\\
    R_1=4,G_1=64
\end{bmatrix}\times1$}&
\multirow{3}{*}{$\begin{bmatrix}
    D_1=64\\
    H_1=1,p_1=4\\
    R_1=4,G_1=64
\end{bmatrix}\times1$}&
\multirow{3}{*}{$\begin{bmatrix}
    D_1=128\\
    H_1=1,p_1=4\\
    R_1=4,G_1=64
\end{bmatrix}\times1$}\\
&&&&&\\
&&GWSFFN&&&\\

\midrule
\multirow{4}{*}{Stage 2}&\multirow{4}{*}{28$\times$28}&Downsample&Conv 3$\times$3, 192, stride 2&Conv 3$\times$3, 256, stride 2&Conv 3$\times$3, 384, stride 2&Conv 3$\times$3, 512, stride 2\\
\cmidrule{3-7}
&&\multirow{2}{*}{MHDSSA}&
\multirow{3}{*}{$\begin{bmatrix}
    D_2=192\\
    H_2=3,p_2=2\\
    R_2=4,G_2=64
\end{bmatrix}\times2$}&
\multirow{3}{*}{$\begin{bmatrix}
    D_2=256\\
    H_2=4,p_2=2\\
    R_2=4,G_2=64
\end{bmatrix}\times2$}&
\multirow{3}{*}{$\begin{bmatrix}
    D_2=384\\
    H_2=6,p_2=2\\
    R_2=4,G_2=64
\end{bmatrix}\times2$}&
\multirow{3}{*}{$\begin{bmatrix}
    D_2=512\\
    H_2=8,p_2=2\\
    R_2=4,G_2=64
\end{bmatrix}\times2$}\\
&&&&&\\
&&GWSFFN&&&\\

\midrule
\multirow{4}{*}{Stage 3}&\multirow{4}{*}{14$\times$14}&Downsample&Conv 3$\times$3, 384, stride 2&Conv 3$\times$3, 512, stride 2&Conv 3$\times$3, 768, stride 2&Conv 3$\times$3, 1024, stride 2\\
\cmidrule{3-7}
&&\multirow{2}{*}{MHDSSA}&
\multirow{3}{*}{$\begin{bmatrix}
    D_3=384\\
    H_3=6,p_3=1\\
    R_3=4,G_3=64
\end{bmatrix}\times3$}&
\multirow{3}{*}{$\begin{bmatrix}
    D_3=512\\
    H_3=8,p_3=1\\
    R_3=4,G_3=64
\end{bmatrix}\times3$}&
\multirow{3}{*}{$\begin{bmatrix}
    D_3=768\\
    H_3=12,p_3=1\\
    R_3=4,G_3=64
\end{bmatrix}\times3$}&
\multirow{3}{*}{$\begin{bmatrix}
    D_3=1024\\
    H_3=16,p_3=1\\
    R_3=4,G_3=64
\end{bmatrix}\times3$}\\
&&&&&\\
&&GWSFFN&&&\\

\midrule
Classifier&1$\times$1&Linear&\multicolumn{4}{c}{1000-FC}\\

\bottomrule 
\end{tabular}
% \begin{tablenotes}
% %\footnotesize
% \end{tablenotes}
% \end{threeparttable}
}

\label{tab:arch}
\end{table*}

\section{SpikingResformer}
In this section, we first introduce the overall architecture of the proposed SpikingResformer and compare it with existing SEW ResNet~\cite{fang2021deep} and Spikformer~\cite{zhou2023spikformer}.
Then we detail the design of the spiking resformer block.

\subsection{Overall Architecture}

The overall pipeline of SEW ResNet~\cite{fang2021deep}, Spikformer~\cite{zhou2023spikformer}, and the proposed SpikingResformer are shown in Fig.~\ref{fig:overview}.
As shown in Fig.~\ref{fig:overview}, Spikformer employs a Spiking Patch Splitting (SPS) module to project an image to $d$ dimensional feature with reduced spatial size.
However, the local information can only be poorly modeled by this shallow spiking convolutional module and it only generates single-scale feature maps.
On the other hand, SEW ResNet is much deeper than the SPS in Spikformer and have a greater capability in extracting multi-scale feature with the multi-stage architecture, but lacks the global self-attention mechanism which helps extract global information.
To overcome the limitations of these architectures while exploiting their advantages, we propose SpikingResformer, which combines the ResNet-based architecture and the spiking self-attention mechanism.

The details of the overall architecture of SpikingResformer series are listed in Tab.~\ref{tab:arch}.
Similar to the ResNet-based SNNs~\cite{fang2021deep,hu2021spiking}, our model starts with a stem architecture consisting of a 7$\times$7 convolution and a 3$\times$3 max pooling to pre-extract localized features and employs a multi-stage backbone to generates multi-scale feature maps.
In each stage, multiple spiking Resformer blocks are stacked sequentially.
Each spiking Resformer block consists of two modules, named Multi-Head Dual Spike Self-Attention (MHDSSA) block and Group-Wise Spiking Feed-Forward Network (GWSFFN).
A downsample layer is applied before each stage (except the first stage) to reduce the size of the feature maps and project them to a higher dimension (2$\times$ downsampling of resolution with 2$\times$ enlargement of dimension).
Finally, the model ends with a global average pooling layer and a classification layer.

\subsection{Spiking Resformer Block}
As illustrated in Fig.~\ref{fig:overview}, the spiking Resformer block consists of a Multi-Head Dual Spike Self-Attention (MHDSSA) module and a Group-Wise Spiking Feed-Forward Network (GWSFFN).
We first introduce the two modules and then derive the form of the spiking Resformer block.

\noindent
\textbf{Multi-Head Convolutional Spiking Self-Attention.}
In Sec.~\ref{sec:DSSA}, we propose the single-head form of DSSA.
It can be easily extended to the multi-head DSSA (MHDSSA) following a similar approach to the vanilla Transformer~\cite{vaswani2017attention}.
In MHDSSA, we first split the results of linear transformation in DST into $h$ heads, then perform DSSA on each head and concatenate them together.
Finally, we use point-wise convolution to project the concatenated features thus fusing the features in different heads.
The MHDSSA can be formulated as follows:
\begin{equation}
{\rm MHDSSA}({\bf X})={\rm BN}({\rm Conv_1}([\mathrm{DSSA}_i({\rm SN}({\bf X}))]_{i=1}^h)),
\end{equation}
where $[\dots]$ denotes the concatenate operation, $\rm Conv_1$ denotes the point-wise convolution.

\noindent
\textbf{Group-Wise Spiking Feed-Forward Network.}
The spiking feed-forward network (SFFN) proposed in previous spiking vision transformers is composed of two linear layers with batch normalization and spiking neuron activation~\cite{zhou2023spikformer,yao2023spike}.
Moreover, the expansion ratio is usually set to 4, i.e., the first layer raises the dimension by a factor of 4 while the second layer reduces to the original dimension.

Based on SFFN, we insert a 3$\times$3 convolution layer with the residual connection between two linear layers to enable SFFN to extract local features. Since the dimension of the hidden features between the two linear layers is expanded by a factor of 4 compared to the input, in order to reduce the number of parameters and computational overhead, we use group-wise convolution and set every 64 channels as 1 group.
We also employ the spike-driven design in~\cite{zhou2023spikingformer,yao2023spike}.
The group-wise spiking feed-forward network (GWSFFN) can be formulated as follows:
\begin{align}
{\rm FFL}_i({\bf X})&={\rm BN}({\rm Conv}_1({\rm SN}({\bf X}))),\;i=1,2,\\
{\rm GWL}({\bf X})&={\rm BN}({\rm GWConv}({\rm SN}({\bf X})))+{\bf X},\\
{\rm GWSFFN}({\bf X})&={\rm FFL}_2({\rm GWL}({\rm FFL}_1({\bf X}))).
\end{align}
Here ${\rm FFL}_i(\cdot),i=1,2$ denote the feed-forward layers, ${\rm Conv}_1(\cdot)$ is point-wise convolution (1$\times$1 convolution), which equal to the linear transformation. ${\rm GWL(\cdot)}$ denotes group-wise convolution layer, ${\rm GWConv(\cdot)}$ denotes the group-wise convolution.

\noindent
\textbf{Spiking Resformer Block.}
With the MHDSSA module and GWSFFN above, the spiking resformer block can be formulated as:
\begin{align}
{\bf Y}_i &= {\rm MHDSSA}({\bf X}_i) + {\bf X}_i,\\
{\bf X}_{i+1} &= {\rm GWSFFN}({\bf Y}_i) + {\bf Y}_i.
\end{align}
where ${\bf Y}_i$ denotes the output features of MHDSSA module in the $i$-th spiking resformer block.

\begin{table*}[!ht]
\centering
\caption{Evaluation on ImageNet. SOPs denotes the average synaptic operations of an image inference on ImageNet validation data. Energy is the estimation of energy consumption same as~\cite{zhou2023spikformer,yao2023spike}. The default input resolution for training and inference is 224$\times$224. $\dagger$~means the input is enlarged to 288$\times$288 in inference. -~means the data is not provided in the original paper.}
\resizebox{\linewidth}{!}
{
\begin{tabular}{cccccccc}
\toprule
\multirow{2}{*}{\bf Method} & \multirow{2}{*}{\bf Type} & \multirow{2}{*}{\bf Architecture} & \multirow{2}{*}{\bf T} & {\bf Param} & {\bf SOPs} & {\bf Energy} & {\bf Top-1 Acc.} \\
&&&&(M)&(G)&(mJ)&(\%)\\
\midrule

\multirow{2}{*}{Spiking ResNet~\cite{hu2021spiking}}&\multirow{2}{*}{ANN-to-SNN}
&ResNet-34&350&21.79&65.28&59.30&71.61\\
&&ResNet-50&350&25.56&78.29&70.93&72.75\\
\cmidrule{3-8}
STBP-tdBN~\cite{zheng2021going}&Direct Training&Spiking ResNet-34&6&21.79&6.50&6.39&63.72\\
\cmidrule{3-8}
\multirow{2}{*}{TET~\cite{deng2022temporal}}&\multirow{2}{*}{Direct Training}&Spiking ResNet-34&6&21.79&-&-&64.79\\
&&SEW ResNet-34&4&21.79&-&-&68.00\\
\cmidrule{3-8}
\multirow{4}{*}{SEW ResNet~\cite{fang2021deep}}
&\multirow{4}{*}{Direct Training}
&SEW ResNet-34&4&21.79&3.88&4.04&67.04\\
&&SEW ResNet-50&4&25.56&4.83&4.89&67.78\\
&&SEW ResNet-101&4&44.55&9.30&8.91&68.76\\
&&SEW ResNet-152&4&60.19&13.72&12.89&69.26\\

\midrule

\multirow{3}{*}{Spikformer~\cite{zhou2023spikformer}}
&\multirow{3}{*}{Direct Training}
&Spikformer-8-384&4&16.81&6.82&7.73&70.24\\
&&Spikformer-8-512&4&29.68&11.09&11.58&73.38\\
&&Spikformer-8-768&4&66.34&22.09&21.48&74.81\\
\cmidrule{3-8}
\multirow{3}{*}{Spikingformer~\cite{zhou2023spikingformer}}
&\multirow{3}{*}{Direct Training}
&Spikingformer-8-384&4&16.81&-&4.69&72.45\\
&&Spikingformer-8-512&4&29.68&-&7.46&74.79\\
&&Spikingformer-8-768&4&66.34&-&13.68&75.85\\
\cmidrule{3-8}
\multirow{4}{*}{Spike-driven Transformer~\cite{yao2023spike}}
&\multirow{4}{*}{Direct Training}
&Spike-driven Transformer-8-384&4&16.81&-&3.90&72.28\\
&&Spike-driven Transformer-8-512&4&29.68&-&4.50&74.57\\
&&Spike-driven Transformer-8-768&4&66.34&-/-&6.09/-&76.32/77.07${}^\dagger$\\
\cmidrule{3-8}
\multirow{4}{*}{\bf SpikingResformer (Ours)}
&\multirow{4}{*}{Direct Training}
&SpikingResformer-Ti&4&{\bf 11.14}&{\bf 2.73}/{\bf 4.71}${}^\dagger$&{\bf 2.46}/{\bf 4.24}${}^\dagger$&{\bf 74.34}/{\bf 75.57}${}^\dagger$\\
&&SpikingResformer-S&4&{\bf 17.76}&{\bf 3.74}/{\bf 6.40}${}^\dagger$&{\bf 3.37}/{\bf 5.76}${}^\dagger$&{\bf 75.95}/{\bf 76.90}${}^\dagger$\\
&&SpikingResformer-M&4&{\bf 35.52}&{\bf 6.07}/{\bf 10.24}${}^\dagger$&{\bf 5.46}/{\bf 9.22}${}^\dagger$&{\bf 77.24}/{\bf 78.06}${}^\dagger$\\
&&SpikingResformer-L&4&{\bf 60.38}&{\bf 9.74}/{\bf 16.40}${}^\dagger$&{\bf 8.76}/{\bf 14.76}${}^\dagger$&{\bf 78.77}/{\bf 79.40}${}^\dagger$\\

\bottomrule
\end{tabular}
}
\label{tab:ImageNet}
\end{table*}

\section{Experiments}

In this section, we first evaluate the performance and energy efficiency of SpikingResformer on the ImageNet classification task. Then, we perform ablation experiments on key components in SpikingResformer. Finally, we evaluate the transfer learning ability of SpikingResformer.

\subsection{ImageNet Classification}
\label{sec:ImageNet}

%\noindent
%\textbf{Experimental Settings.}
ImageNet~\cite{deng2009imagenet} is one of the most typical static image datasets widely used for image classification.
For a fair comparison, we generally follow the data augmentation strategy and training setup in~\cite{yao2023spike}.
More details of the experimental setup can be found in the supplementary.

\noindent
\textbf{Results.}
Our experimental results are listed in Tab.~\ref{tab:ImageNet}. For comparison, we also list the results of existing spiking convolutional networks and spiking Vision Transformers. As shown in Tab.~\ref{tab:ImageNet}, SpikingResformer achieves higher accuracy, fewer parameters, and lower energy consumption at the same time compared to existing methods. For instance, SpikingResformer-Ti achieves 74.34\% accuracy with only 11.14M parameters and 2.73G SOPs (2.46mJ), outperforming Spike-driven Transformer-8-384 by 2.06\%, saving 5.67M parameters and 1.44mJ energy.
SpikingResformer-M achieves 77.24\% accuracy with 35.52M parameters and 6.07G SOPs (5.46mJ), outperforming Spike-driven Transformer-8-768 by 0.92\% while saving 30.82M parameters.
Particularly, the SpikingResformer-L achieves up to 79.40\% accuracy when the input size is enlarged to 288$\times$288, which is the state-of-the-art result in SNN field.

\begin{table}[!t]
\centering
\caption{Ablation Study on ImageNet100 dataset. The number of parameters for all variants is comparable to SpikingResformer-S.}
\resizebox{\linewidth}{!}
{
\begin{tabular}{lccc}
\toprule
\bf Model & {\bf SOPs}~(G) & {\bf Energy}~(mJ) & {\bf Acc.}~(\%) \\
\midrule
SpikingResformer-S&2.43&2.18&88.06\\
w/o multi-stage architecture&1.84&1.66&85.32\\
w/o group-wise convolution&2.37&2.13&84.64\\
w/o DSSA&\multicolumn{3}{c}{not converge}\\
w/o $p\times p$ convolution&4.34&3.91&86.14\\
w/o scaling&\multicolumn{3}{c}{not converge}\\

\bottomrule
\end{tabular}
}
\label{tab:ablation}
\end{table}

\begin{table*}[!t]
\centering
\caption{\centering{Transfer learning results on CIFAR10, CIFAR100, CIFAR10-DVS, DVSGesture datasets.}}
\resizebox{0.8\linewidth}{!}
{
\begin{tabular}{cccccccccc}
\toprule
\multirow{2}{*}{\bf Method} & \multirow{2}{*}{\bf Type} & \multicolumn{2}{c}{CIFAR10} & \multicolumn{2}{c}{CIFAR100} & \multicolumn{2}{c}{CIFAR10-DVS} & \multicolumn{2}{c}{DVSGesture}\\
\cmidrule(lr){3-4}\cmidrule(lr){5-6}\cmidrule(lr){7-8}\cmidrule(lr){9-10}
&&\bf T&\bf Acc.\ &\bf T&\bf Acc.\ &\bf T&\bf Acc.\ &\bf T&\bf Acc.\ \\
\midrule
STBP-tdBN~\cite{zheng2021going}&Direct Training&6&93.16&-&-&10&67.8&40&96.87\\
PLIF~\cite{fang2021incorporating}&Direct Training&8&93.50&-&-&20&74.8&20&97.57\\
Dspike~\cite{li2021differentiable}&Direct Training&6&94.25&6&74.24&10&75.4&-&-\\
\midrule
Spikformer~\cite{zhou2023spikformer}&Direct Training&4&95.19&4&77.86&16&80.6&16&97.9\\
Spikingformer~\cite{zhou2023spikingformer}&Direct Training&4&95.61&4&79.09&16&81.3&16&98.3\\
Spike-driven Transformer~\cite{yao2023spike}&Direct Training&4&95.6&4&78.4&16&80.0&16&\bf 99.3\\
\midrule
Spikformer~\cite{zhou2023spikformer}&Transfer Learning&4&97.03&4&83.83 &-&-&-&- \\
{\bf SpikingResformer (Ours)}&Transfer Learning&4&\bf 97.40&4&\bf 85.98&10&\bf 84.8&10&93.4\\

\bottomrule
\end{tabular}
}
\label{tab:transfer learning}
\end{table*}

\subsection{Ablation Study}

We perform ablation experiments on key components in SpikingResformer, including the multi-stage architecture, the group-wise convolution layer in GWSFFN, and our proposed spiking self-attention mechanism.
The ablation experiments are conducted on the ImageNet100 dataset, which is a subset of the ImageNet dataset and consists of 100 categories from the ImageNet dataset.
The experimental setup basically follows the one in Sec.~\ref{sec:ImageNet}. Detailed settings are listed in the supplementary.

\noindent
\textbf{Multi-Stage Architecture.}
To verify the effectiveness of the multi-stage architecture, we replace it with the Spikingformer-based architecture, while the structure of the spiking Resformer block remains unchanged.
We adjust the embedding dimensions to make the model parameters comparable to SpikingResformer-S.
As shown in Tab.~\ref{tab:ablation}, SpikingResformer-S outperforms the variant without multi-stage architecture by 2.74\%, demonstrating the effectiveness of multi-stage architecture.

\noindent
\textbf{Group-Wise Convolution Layer.}
In comparison to the point-wise SFFN, GWSFFN employs a 3$\times$3 group-wise convolution layer between the two linear layers.
We evaluate its effect by removing the group-wise convolution layer and increasing the dimension to keep the number of parameters constant.
As demonstrated in Tab.~\ref{tab:ablation}, SpikingResformer-S achieves 3.42\% higher accuracy in contrast to the variant without the group-wise convolution layer.
This difference highlights the benefits of the group-wise convolution layer in GWSFFN.

\noindent
\textbf{Dual Spike Self-Attention.}
To validate the effectiveness of DSSA, we first replace DSSA with existing spiking self-attention mechanisms, including Spiking Self-Attention (SSA) in Spikformer and Spike-Driven Self-Attention (SDSA) in Spike-driven Transformer.
However, neither SSA nor SDSA converges.
We believe this is because SSA and SDSA do not fit the multi-scale input.
To further validate the key factors that help DSSA fit for the multi-scale input, we conduct two more sets of experiments.
One set replaces all $p\times p$ convolutions with $1\times 1$ convolutions in DST to validate the effect of reducing spatial size.
The other removes the scaling factor or replaces our proposed scaling factors with $1/\sqrt{d}$ to validate the effect of scaling.
As a result, the first group converges but only achieves 86.14\% accuracy with a higher energy consumption of 3.91mJ, while the second set does not converge.
This shows that the scaling factor is the key to convergence.
We believe that both SSA and SDSA lack proper scaling methods, thus do not apply to multi-stage architecture.
SSA employs the same scaling factor as vanilla self-attention, i.e., $1/\sqrt{d}$. However, this does not apply to spike matrix multiplication, since spikes do not have mean 0 and variance 1.
SDSA has no scaling method, which may be feasible when the feature map is small but not for multi-scale inputs.

\subsection{Transfer Learning}
High transfer learning ability is a key advantage of Vision Transformer.
We evaluate the transfer learning ability of the proposed SpikingResformer on static dataset CIFAR10 and CIFAR100~\cite{krizhevsky2009learning} and neuromorphic dataset CIFAR10-DVS~\cite{li2017cifar10} and DVSGesture~\cite{amir2017low} by fine-tuning the models pre-trained on ImageNet.
Among the existing spiking Vision Transformers, only Spikformer provides transfer learning results, and only on the static image datasets CIFAR10 and CIFAR100.
Thus, we also compared our results with the direct training results for a comprehensive comparison.
Tab.~\ref{tab:transfer learning} lists the highest accuracy achieved by these methods. The experimental setup and a more detailed comparison can be found in the supplementary.

\noindent
\textbf{Static Image Datasets.}
As shown in Tab.~\ref{tab:transfer learning}, SpikingResformer achieves 97.40\% accuracy on CIFAR10 dataset and 85.98\% accuracy on CIFAR100 dataset, which is the state-of-the-art result, outperforming the transfer learning results of Spikformer by 0.37\% on CIFAR10 and 2.15\% on CIFAR100.
Compared to direct training methods, SpikingResformer obtained from transfer learning has significantly better performance.
For example, SpikingResformer outperforms Spikingformer by 6.89\% on CIFAR100 dataset, demonstrating the advantage of transfer learning.

\noindent
\textbf{Neuromorphic Datasets.}
Neuromorphic datasets differ significantly from traditional static image datasets. The samples of the neuromorphic dataset consist of event streams instead of RGB images.
As a result, there is a large gap between the source and target domain for models pre-trained on static image datasets.
To bridge this gap, we stack the events over a period of time to form a frame. The RGB channels are replaced with positive events, negative events, and the sum of events channels, respectively.
As shown in Tab.~\ref{tab:transfer learning}, transfer learning results of SpikingResformer significantly outperform the direct training ones on CIFAR10-DVS.
SpikingResformer achieves 84.8\% accuracy on CIFAR10-DVS, outperforming Spikingformer by 3.5\%.
However, the transfer learning results on DVSGesture fail to achieve comparable performance to direct training.
SpikingResformer only achieves 93.4\% accuracy on DVSGesture, falling behind the state-of-the-art method Spike-driven Transformer by 5.9\%.
We believe that this is mainly due to the way CIFAR10-DVS is constructed differs from DVSGesture.
CIFAR10-DVS is converted from CIFAR10, which does not contain temporal information.
Thus, models pre-trained on static datasets can transfer to CIFAR10-DVS well.
However, DVSGesture is directly created from human gestures using a dynamic vision sensor, thus containing rich temporal information. As a result, models pre-trained on static datasets do not transfer well to DVSGesture.
We hope our exploration of transfer learning on neuromorphic datasets could pave the way for further transfer learning research of SNNs.

\section{Conclusion}

In this paper, we propose a novel spiking self-attention mechanism named Dual Spike Self-Attention (DSSA). It produces spiking self-attention via Dual Spike Transformation, which is fully spike-driven and compatible with SNNs.
We detail the scaling factors in DSSA enabling it to handle feature maps of arbitrary scales.
Based on DSSA, we propose SpikingResformer, which combines the ResNet-based multi-stage architecture with our proposed DSSA to achieve superior performance and energy efficiency with fewer parameters.
Extensive experiments demonstrate the effectiveness and superiority of the proposed SpikingResformer.

\section*{Acknowledgments}
This work was supported by the National Natural Science Foundation of China (62176003, 62088102) and by the Beijing Nova Program (20230484362).

{
    \small
    \bibliographystyle{ieeenat_fullname}
    \bibliography{main}
}

% WARNING: do not forget to delete the supplementary pages from your submission 
\clearpage
\setcounter{section}{0}
\setcounter{table}{0}
\setcounter{figure}{0}
\setcounter{equation}{0}
\maketitlesupplementary

\renewcommand{\thesection}{\Alph{section}}
\renewcommand{\thetable}{S\arabic{table}}
\renewcommand{\thefigure}{S\arabic{figure}}
\renewcommand{\theequation}{S\arabic{equation}}

\newtheorem{lemma}{Lemma}

\section{Proof of Theorem 1}

\begin{lemma}{\rm (}{\bf Expectation and variance of the product of independent random variables}{\rm )}
\label{lemma:1}
Given two independent random variables $a$ and $b$ with expectation and variance, we have
\begin{align}
{\rm E}(ab)&={\rm E}(a){\rm E}(b),\\
{\rm Var}(ab)&={\rm Var}(a){\rm Var}(b)+{\rm Var}(a){\rm E}(b)^2+{\rm Var}(b){\rm E}(a)^2.
\end{align}
\end{lemma}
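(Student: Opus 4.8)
The plan is to derive both identities directly from the definition of independence together with the elementary relation $\mathrm{Var}(Z)=\mathrm{E}(Z^2)-\mathrm{E}(Z)^2$. The only probabilistic input needed is the factorization property: if $a$ and $b$ are independent, then $\mathrm{E}(g(a)h(b))=\mathrm{E}(g(a))\mathrm{E}(h(b))$ for measurable $g,h$ whenever the relevant expectations exist. This is standard and follows from the joint distribution of $(a,b)$ being the product of its marginals.

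First I would establish the mean identity. Choosing $g(a)=a$ and $h(b)=b$ in the factorization property immediately yields $\mathrm{E}(ab)=\mathrm{E}(a)\mathrm{E}(b)$, which is the first claim.

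Next I would treat the variance. I would start from $\mathrm{Var}(ab)=\mathrm{E}((ab)^2)-\mathrm{E}(ab)^2$ and apply the factorization property with $g(a)=a^2$ and $h(b)=b^2$ to get $\mathrm{E}(a^2b^2)=\mathrm{E}(a^2)\mathrm{E}(b^2)$. I would then substitute the second-moment expansions $\mathrm{E}(a^2)=\mathrm{Var}(a)+\mathrm{E}(a)^2$ and $\mathrm{E}(b^2)=\mathrm{Var}(b)+\mathrm{E}(b)^2$, expand the product into its four terms, and subtract $\mathrm{E}(ab)^2=\mathrm{E}(a)^2\mathrm{E}(b)^2$ using the mean identity already proved. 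The term $\mathrm{E}(a)^2\mathrm{E}(b)^2$ cancels, leaving precisely $\mathrm{Var}(a)\mathrm{Var}(b)+\mathrm{Var}(a)\mathrm{E}(b)^2+\mathrm{Var}(b)\mathrm{E}(a)^2$, as claimed.

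There is no genuine obstacle here; the statement is a routine second-moment computation. The only point deserving a moment's care is the justification that $a^2$ and $b^2$ inherit independence from $a$ and $b$, so that the factorization of $\mathrm{E}(a^2b^2)$ is legitimate. This is immediate, since measurable functions of independent random variables are themselves independent.
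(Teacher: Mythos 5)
Your proposal is correct and follows essentially the same route as the paper's proof: both compute ${\rm Var}(ab)={\rm E}(a^2b^2)-{\rm E}(ab)^2$, factor ${\rm E}(a^2b^2)={\rm E}(a^2){\rm E}(b^2)$ by independence, and expand the second moments via ${\rm E}(a^2)={\rm Var}(a)+{\rm E}(a)^2$. The only cosmetic difference is that the paper derives the mean identity from ${\rm Cov}(a,b)=0$ while you invoke the general factorization ${\rm E}(g(a)h(b))={\rm E}(g(a)){\rm E}(h(b))$, and you make explicit the justification (independence of $a^2$ and $b^2$) that the paper uses implicitly.
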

\begin{proof}
The expectation of $ab$ can be formulated as:
\begin{equation}
{\rm E}(ab)={\rm E}(a){\rm E}(b)+{\rm Cov}(a,b).
\end{equation}
Since the random variables $a$ and $b$ are independent of each other, the covariance ${\rm Cov}(a,b)=0$. Thus, we have
\begin{equation}
{\rm E}(ab)={\rm E}(a){\rm E}(b)+0={\rm E}(a){\rm E}(b).
\end{equation}
Using the above conclusion and the definition of variance, we have
\begin{equation}
\begin{aligned}
&{\rm Var}(ab)={\rm E}((ab-{\rm E}(ab))^2)\\
&={\rm E}(a^2b^2)-{\rm E}(ab)^2\\
&={\rm E}(a^2){\rm E}(b^2)-{\rm E}(a)^2{\rm E}(b)^2\\
&=({\rm Var}(a)+{\rm E}(a)^2)({\rm Var}(b)+{\rm E}(b)^2)-{\rm E}(a)^2{\rm E}(b)^2\\
&={\rm Var}(a){\rm Var}(b)+{\rm Var}(a){\rm E}(b)^2+{\rm Var}(b){\rm E}(a)^2.
\end{aligned}
\end{equation}
\end{proof}

\begin{lemma}{\rm (}{\bf Expectation and variance of the sum of independent random variables}{\rm )}
\label{lemma:2}
Given independent random variables $a_1,a_2,\dots,a_n$ with expectation and variance, we have
\begin{align}
{\rm E}(\sum_{i=1}^n a_i)&=\sum_{i=1}^n {\rm E}(a_i),\\
{\rm Var}(\sum_{i=1}^n a_i)&=\sum_{i=1}^n {\rm Var}(a_i).
\end{align}
\end{lemma}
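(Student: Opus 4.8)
The plan is to prove the two identities separately, since they rest on different principles: the expectation identity is purely a matter of linearity and needs no independence at all, whereas the independence hypothesis is exactly what makes the variance identity work.

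First I would establish additivity of the expectation. For two variables this is the statement ${\rm E}(a+b)={\rm E}(a)+{\rm E}(b)$, which follows immediately from linearity of the expectation operator (linearity of the sum/integral against the joint distribution) and holds regardless of whether $a$ and $b$ are independent. A one-line induction on $n$ then gives ${\rm E}(\sum_{i=1}^n a_i)=\sum_{i=1}^n {\rm E}(a_i)$: the inductive step splits $\sum_{i=1}^{n+1}a_i=(\sum_{i=1}^{n}a_i)+a_{n+1}$ and applies the two-variable case.

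For the variance I would start from the definition. Writing $\mu_i={\rm E}(a_i)$ and using the expectation identity just proved to substitute ${\rm E}(\sum_i a_i)=\sum_i\mu_i$, I obtain
\[
{\rm Var}\Big(\sum_{i=1}^n a_i\Big)={\rm E}\Big[\Big(\sum_{i=1}^n (a_i-\mu_i)\Big)^2\Big].
\]
Expanding the square and interchanging the (finite, hence harmless) sum with the expectation splits the right-hand side into the diagonal part $\sum_{i=1}^n {\rm E}[(a_i-\mu_i)^2]=\sum_{i=1}^n {\rm Var}(a_i)$ and the off-diagonal cross terms $\sum_{i\ne j}{\rm E}[(a_i-\mu_i)(a_j-\mu_j)]$. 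The crux is to show every cross term vanishes: for $i\ne j$ the variables $a_i$ and $a_j$ are independent, hence so are the centered variables $a_i-\mu_i$ and $a_j-\mu_j$, and the expectation product rule from Lemma~\ref{lemma:1} gives ${\rm E}[(a_i-\mu_i)(a_j-\mu_j)]={\rm E}(a_i-\mu_i)\,{\rm E}(a_j-\mu_j)=0\cdot 0=0$. What remains is precisely $\sum_{i=1}^n {\rm Var}(a_i)$.

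I expect no serious obstacle, as the statement is elementary; the only point that deserves emphasis is that independence is invoked solely to kill the cross-covariance terms in the variance expansion and plays no role in the expectation identity. The remaining work — bookkeeping of the double sum and the legitimate exchange of expectation with a finite summation — is routine.
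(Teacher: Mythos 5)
Your proposal is correct and follows essentially the same route as the paper: expand the variance of the sum and observe that all off-diagonal cross terms vanish under independence, leaving $\sum_{i=1}^n {\rm Var}(a_i)$. The only cosmetic difference is that you derive the vanishing of each cross term ${\rm E}[(a_i-\mu_i)(a_j-\mu_j)]=0$ from the product rule of Lemma~\ref{lemma:1}, whereas the paper directly quotes ${\rm Cov}(a_i,a_j)=0$ for independent variables in the two-variable formula and then generalizes --- these are the same fact stated in different notation.
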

\begin{proof}
Considering first the case of two independent random variables $a_i$ and $a_j$ where $i\ne j$, the covariance ${\rm Cov}(a_i,a_j)=0$, we have

\begin{align}
{\rm E}(a_i+a_j)&={\rm E}(a_i)+{\rm E}(a_j),\\
{\rm Var}(a_i+a_j)&={\rm Var}(a_i)+{\rm Var}(a_j)+2{\rm Cov}(a_i,a_j)\notag \\&={\rm Var}(a_i)+{\rm Var}(a_j)
\end{align}
This can be simply generalized to the case of $n$ random variables as:
\begin{align}
{\rm E}(\sum_{i=1}^n a_i)&=\sum_{i=1}^n {\rm E}(a_i),\\
{\rm Var}(\sum_{i=1}^n a_i)&=\sum_{i=1}^n {\rm Var}(a_i)+\sum_{1\le i,j\le n,i\ne j} {\rm Cov}(a_i,a_j)\notag \\
&=\sum_{i=1}^n {\rm Var}(a_i).
\end{align}
\end{proof}

With Lemma~\ref{lemma:1} and Lemma~\ref{lemma:2}, we prove the Theorem 1 in the main text.
\begin{proof}
Let us first consider the case of ${\rm DST_T}({\bf X},{\bf Y};f(\cdot))$.
We denote the result of applying linear transformation $f$ on $\bf Y$ as ${\bf Z}=f({\bf Y})={\bf YW}$.
Thus, each element in the result of $\rm DST_T$ can be formulated as:
\begin{equation}
I_{i,j}[t]=\sum_{k=1}^m x_{i,k}[t]z_{j,k}[t]=\sum_{k=1}^m x_{i,k}[t]\left( \sum_{l=1}^m y_{j,l}[t]w_{l,k}[t]\right)
\end{equation}
Based on the assumption, each $z_{j,k}[t]$ has a mean of 0 and variance of 1, and each $x_{i,k}[t]$ subjects to Bernoulli distribution $B(f_x)$, thus we have ${\rm E}(x_{i,k}[t])=f_x$ and ${\rm Var}(x_{i,k}[t])=f_x(1-f_x)$. According to Lemma~\ref{lemma:1}, we have
\begin{align}
{\rm E}(x_{i,k}[t]z_{j,k}[t])=&0\cdot f_x=0,\label{eq:exp}\\
{\rm Var}(x_{i,k}[t]z_{j,k}[t])=&1\cdot f_x(1-f_x)+1\cdot f_x^2\notag \\&+f_x(1-f_x)\cdot0^2\notag \\
=&f_x.\label{eq:var}
\end{align}
In addition, each $z_{j,k}[t],k=1,\dots,q$ consists of a set of $y_{l,k}[t],l=1,\dots,q$ that do not overlap each other. Thus $z_{j,k}[t]$ can also be viewed as independent random variables. According to Lemma~\ref{lemma:2}, we have
\begin{align}
{\rm E}(I_{i,j}[t])&=\sum_{k=1}^m 0=0,\label{eq:exp final}\\
{\rm Var}(I_{i,j}[t])&=\sum_{k=1}^m f_x=f_x m.\label{eq:var final}
\end{align}
Similar to $\rm DST_T$, each element in the result of $\rm DST$ can be formulated as:
\begin{equation}
I_{i,j}[t]=\sum_{k=1}^m x_{i,k}[t]z_{k,j}[t]=\sum_{k=1}^m x_{i,k}[t]\left( \sum_{l=1}^q y_{k,l}[t]w_{l,j}[t]\right)
\end{equation}
And we also have ${\rm E}(x_{i,k}[t]z_{k,j}[t])=0$ and ${\rm Var}(x_{i,k}[t]z_{j,k}[t])=f_x$. Thus, the expectation and variance of $I_{i,j}[t]$ are also ${\rm E}(I_{i,j}[t])=0$ and ${\rm Var}(I_{i,j}[t])=f_x m$.
\end{proof}

\noindent
\textbf{Further Discussion.}
In Eq.~\eqref{eq:exp final} and Eq.~\eqref{eq:var final}, we treat $z_{j,k}[t]$ as independent random variables, since each $z_{j,k}[t],k=1,\dots,q$ consists of a set of $y_{l,k}[t],l=1,\dots,q$ that do not overlap each other.
It is true for the $p\times p$ convolution with stride $p$ used in this paper.
However, this property does not hold for all generalized linear transformations.
For example, the $3\times 3$ convolution with stride $1$ leads to input overlap.
For these operations, we need a stronger assumption that assuming $z_{j,k}[t]$ are independent random variables.

\section{Scaling Factors in Existing Spiking Self-Attention Mechanisms}
In the main text, we propose that existing spiking self-attention mechanisms lack reasonable scaling methods and design scaling factors for our DSSA.
In this section, we use a similar approach to design scaling factors for these existing methods and thereby analyze the limitations of these methods.

\noindent
\textbf{Scaling Factor in Spiking Self-Attention (SSA).}
First, we try to design the scaling factor for the Spiking Self-Attention (SSA) in Spikformer~\cite{zhou2023spikformer}. The SSA can be formulated as follows:
\begin{align}
    {\bf Q} &= {\rm SN}({\rm BN}({\bf XW}_Q)),\label{eq:SSA Q} \\
    {\bf K} &= {\rm SN}({\rm BN}({\bf XW}_K)), \label{eq:SSA K}\\
    {\bf V} &= {\rm SN}({\rm BN}({\bf XW}_V)), \label{eq:SSA V}\\
    \mathrm{SSA}({\bf Q},{\bf K},{\bf V}) &= \mathrm{SN}({\bf QK}^{\rm T}{\bf V} * c),
\end{align}
where ${\bf X}\in\mathbb{R}^{HW\times d}$ is the input, ${\bf W}_Q,{\bf W}_K,{\bf W}_V\in\mathbb{R}^{d\times d}$ are weight matrices, $H$ and $W$ are the height and width of input, respectively, $d$ is the embedding dimension, $c$ is the scaling factor.
We denote ${\bf I}={\bf QK}^{\rm T}{\bf V}$, each element in ${\bf I}$ can be formulated as:
\begin{equation}
I_{i,j}[t]=\sum_{r=1}^{d}\sum_{l=1}^{HW}q_{i,r}[t]k_{r,l}[t]v_{l,j}[t].
\end{equation}
Assume that all elements in $\bf Q$, $\bf K$, and $\bf V$ are independent random variables, $q_{i_q,j_q}[t]$ in $\bf Q$ subject to Bernoulli distribution $q_{i_q,j_q}[t]\sim B(f_Q)$, $k_{i_k,j_k}[t]$ in $\bf K$ subject to $B(f_K)$, $v_{i_v,j_v}[t]$ in $\bf V$ subject to $B(f_V)$, respectively, $f_Q$, $f_K$, and $f_V$ are the average firing rate of $\bf Q$, $\bf K$, and $\bf V$, respectively.
We have ${\rm E}(I_{i,j}[t])=HWdf_Qf_Kf_V$.

However, the form of variance is complex.
This is because the summation terms $q_{i,r}[t]k_{r,l}[t]v_{l,j}[t]$ are not independent thus introducing a lot of covariance. The variance can be formulated as:
\begin{equation}
\begin{aligned}
&{\rm Var}(I_{i,j}[t])\\
=&\sum_{r=1}^{d}\sum_{l=1}^{HW}\bigg({\rm Var}(q_{i,r}[t]k_{r,l}[t]v_{l,j}[t])\\&+\sum_{r^\prime\ne r}{\rm Cov}(q_{i,r}[t]k_{r,l}[t]v_{l,j}[t],q_{i,r^\prime}[t]k_{r^\prime,l}[t]v_{l,j}[t])\\&+\sum_{l^\prime\ne l}{\rm Cov}(q_{i,r}[t]k_{r,l}[t]v_{l,j}[t],q_{i,r}[t]k_{r,l^\prime}[t]v_{l^\prime,j}[t])\bigg)\\
=&HWd\bigg(f_Qf_Kf_V(1-f_Q)(1-f_K)(1-f_V)\\
&+f_Qf_Kf_V^2(1-f_Q)(1-f_K)\\
&+f_Qf_K^2f_V(1-f_Q)(1-f_V)\\
&+f_Q^2f_Kf_V(1-f_K)(1-f_V)\\
&+f_Qf_K^2f_V^2(1-f_Q)\\
&+f_Q^2f_Kf_V^2(1-f_K)\\
&+f_Q^2f_K^2f_V(1-f_V)\\
&+(d-1)(f_Q^2f_K^2f_V-f_Q^2f_K^2f_V^2)\\
&+(HW-1)(f_Qf_K^2f_V^2-f_Q^2f_K^2f_V^2)\bigg)\\
=&HWdf_Qf_Kf_V\bigg(1-(HW+d-1)f_Qf_Kf_V\\
&+(d-1)f_Qf_K+(HW-1)f_Kf_V\bigg).
\end{aligned}
\label{eq:var SSA}
\end{equation}
As shown in Eq.~\eqref{eq:var SSA}, this form is overly complex and lacks practicality. Thus it is difficult to design the scaling factor for SSA.

\noindent
\textbf{Scaling Factor in Spike-driven Self-Attention (SDSA).}
Next, we try to design the scaling factor for the Spike-driven Self-Attention (SDSA) in Spike-driven Transformer~\cite{yao2023spike}. The SDSA can be formulated as follows:
\begin{equation}
{\rm SDSA}({\bf Q},{\bf K},{\bf V})={\rm SN}({\rm SUM_c}({\bf Q}\otimes{\bf K}))\otimes{\bf V},
\label{eq:SDSA}
\end{equation}
where $\otimes$ denotes Hadamard product, $\rm SUM_c$ represents the sum of each column, $\bf Q$, $\bf K$, and $\bf V$ are the same as in Eq.~\eqref{eq:SSA Q} to Eq.\eqref{eq:SSA V}.
The original SDSA does not have a scaling factor. We believe that there should be a scaling factor before the spiking neuron layer and the Eq.~\eqref{eq:SDSA} should be reformulated as follows:
\begin{equation}
{\rm SDSA}({\bf Q},{\bf K},{\bf V})={\rm SN}({\rm SUM_c}({\bf Q}\otimes{\bf K})*c)\otimes{\bf V},
\end{equation}
where $c$ is the scaling factor.
We denote ${\bf I}={\rm SUM_c}({\bf Q}\otimes{\bf K})$, each element in ${\bf I}$ can be formulated as:
\begin{equation}
I_{j}[t]=\sum_{i=1}^{HW}q_{i,j}[t]k_{i,j}[t].
\end{equation}
Following the same assumption in SSA, we have
\begin{equation}
\begin{aligned}
{\rm Var}(I_j[t])&=\sum_{i=1}^{HW}{\rm Var}(q_{i,j}[t]k_{i,j}[t])\\
&=\sum_{i=1}^{HW}(f_Qf_K(1-f_Qf_K))\\
&=HWf_Qf_K(1-f_Qf_K).
\end{aligned}
\end{equation}
Thus, the scaling factor $c$ in SDSA should be $c=1/\sqrt{HWf_Qf_K(1-f_Qf_K)}$.

To validate the effectiveness of this scaling factor, we conduct two sets of further ablation experiments.
One set introduces our proposed scaling factor to the Spike-driven Transformer.
The other replaces the DSSA in SpikingResformer with the SDSA with our proposed scaling factor. The $p\times p$ convolutions and the GWSFFN remain unchanged.
Experimental results are listed in Tab.~\ref{tab:ablation SDSA}.
As shown in Tab.~\ref{tab:ablation SDSA}, the scaling factor successfully solves the non-converge problem, demonstrating the effectiveness of our proposed scaling factor and its necessity for multi-scale feature map inputs.
Moreover, the scaling factor also improves the performance of SDSA with single-scale feature map inputs.

\begin{table}[!t]
\centering
\caption{Further ablation study on ImageNet100 dataset. The number of parameters for all variants is comparable to SpikingResformer-S.}
\resizebox{\linewidth}{!}
{
\begin{tabular}{lc}
\toprule
\bf Model & {\bf Acc.}~(\%) \\
\midrule
SpikingResformer-S&88.06\\
Spike-driven Transformer-8-384 (w/o scaling)&83.06\\
Spike-driven Transformer-8-384 (with scaling)&83.96\\
SpikingResformer-S with SDSA (w/o scaling)& not-converge\\
SpikingResformer-S with SDSA (with scaling)&87.74\\

\bottomrule
\end{tabular}
}
\label{tab:ablation SDSA}
\end{table}

\begin{figure}[!t]
    \centering
    \includegraphics[width=0.9\linewidth]{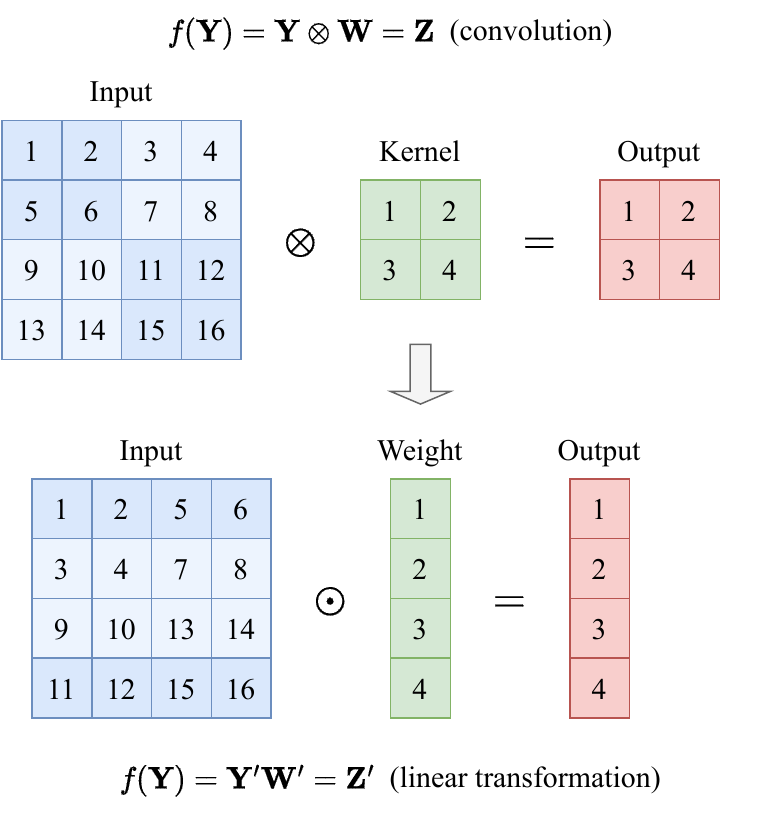}
    \caption{Diagram of the equivalence of convolution to linear transformation. {\bf Top}: ${\rm Conv_p}(\cdot)$ on a 4$\times$4 input where $p=2$; {\bf Bottom}: Its equivalent linear transformation.}
    \label{fig:convolution}
\end{figure}

\section{Equivalence of Convolution to Linear Transformation}

In this section, we discuss the equivalence of convolution to linear transformation.
For ease of understanding, we first visualize a simple example, and then give a formal description of the equivalence.
Since no dynamics in the temporal domain are involved here, we omit the time dimension.

Fig.~\ref{fig:convolution} shows how a 2$\times$2 convolution with a stride of 2 on a 4$\times$4 input is equivalent to a linear transformation.
In order to convert the convolution to its equivalent linear transformation, we first reshape the $h\times w$ convolution kernel $\bf W$ to a $hw\times 1$ weight matrix ${\bf W}^\prime$, where $h$ and $w$ are the height and width of the convolution kernel, respectively. Here $h=w=2$.
Then, we rewrite the $H_{in}\times W_{in}$ input $\bf Y$ to $H_{out}W_{out}\times hw$ input ${\bf Y}^\prime$, where $H_{in}$ and $W_{in}$ are the height and width of the input, $H_{out}$ and $W_{out}$ are the height and width of the output, respectively. Here $H_{in}=W_{in}=4$, $H_{out}=W_{out}=2$.
Each row in ${\bf Y}^\prime$ is a patch of input corresponding to an element in the output.
By the above process, we convert the convolution to its equivalent linear transformation.

We give a formal description of the equivalence of convolution to linear transformation.
Given an input ${\bf Y}\in\{0,1\}^{H_{in}\times W_{in}\times C_{in}}$ and a convolution kernel ${\bf W}\in\mathbb{R}^{h\times w\times C_{out}\times C_{in}}$, the convolution with stride $p$ can be formulated as follows:
\begin{align}
&{\bf Z}={\bf Y}\otimes{\bf W},\\
&z_{i,j,c_{out}}\notag \\&=\sum_{k=1}^h\sum_{l=1}^w\sum_{c_{in}=1}^{C_{in}}\left(y_{(i-1)*p+k,(j-1)*p+l,c_{in}}\cdot w_{k,l,c_{out},c_{in}}\right).
\end{align}
Let $g_Y$ be a mapping from $\{0,1\}^{H_{in}\times W_{in}\times C_{in}}$ to $\{0,1\}^{H_{out}W_{out}\times hwC_{in}}$ and $g_W$ be a mapping from $\mathbb{R}^{h\times w\times C_{out}\times C_{in}}$ to $\mathbb{R}^{hwC_{in}\times C_{out}}$, where
\begin{align}
&{\bf Y}^\prime=g_Y({\bf Y}),\notag \\
s.t.\;&y^\prime_{i*W_{out}+j,c_{in}*hw+k*w+l}=y_{(i-1)*p+k,(j-1)*p+l,c_{in}},\\
&{\bf W}^\prime=g_W({\bf W}),\notag \\
s.t.\;&w^\prime_{c_{in}*hw+k*w+l,c_{out}}=w_{k,l,c_{out},c_{in}}.
%&{\bf Z}^\prime=g_Z({\bf Z}),\notag \\
%s.t.\;&z^\prime_{i*W_{out}+j,c_{out}}=z_{i,j,c_{out}}.
\end{align}
The linear transformation of $g_Y({\bf Y})$ with weight matrix $g_W({\bf W})$ can be formulated as:
\begin{align}
&{\bf Z}^\prime={\bf Y}^\prime{\bf W}^\prime=g_Y({\bf Y})g_W({\bf W}),\\
&z^\prime_{i*W_{out}+j,c_{out}}\notag \\
=&\sum_{k=1}^h\sum_{l=1}^w\sum_{c_{in}=1}^{C_{in}}\notag \\
&\left(y^\prime_{i*W_{out}+j,c_{in}*hw+k*w+l}\cdot w^\prime_{c_{in}*hw+k*w+l,c_{out}}\right) \notag\\
=&\sum_{k=1}^h\sum_{l=1}^w\sum_{c_{in}=1}^{C_{in}}\left(y_{(i-1)*p+k,(j-1)*p+l,c_{in}}\cdot w_{k,l,c_{out},c_{in}}\right)\notag \\
=&z_{i,j,c_{out}}.
\end{align}
Thus, the convolution is equivalent to linear transformation.

\section{Self-Attention in DSSA}
Dual Spike Self-Attention (DSSA) has no explicit Query, Key, and Value, which makes it quite different from the form of the Vanilla Self-Attention (VSA).
In this section, we further discuss how DSSA achieves self-attention.

\noindent
\textbf{Recall.} The DSSA can be formulated as follows:
\begin{align}
{\rm DSSA}({\bf X})&={\rm SN}({\rm DST}({\rm AttnMap}({\bf X}),{\bf X};f(\cdot))*c_2),\label{eq:recall attn} \\
{\rm AttnMap}({\bf X})&={\rm SN}({\rm DST_T}({\bf X}, {\bf X};f(\cdot))*c_1),\\
f({\bf X})&={\rm BN}({\rm Conv_p}({\bf X})).
\end{align}
And the Dual Spike Transformation (DST) can be formulated as follows:
\begin{align}
{\rm DST}({\bf X}, {\bf Y}; f(\cdot))&={\bf X}f({\bf Y})={\bf XYW},\\
{\rm DST_T}({\bf X}, {\bf Y}; f(\cdot))&={\bf X}f({\bf Y})^{\rm T}={\bf XW}^{\rm T}{\bf Y}^{\rm T}.
\end{align}

DSSA achieves self-attention by the two DSTs.
The first one, ${\rm DST_T}({\bf X}, {\bf X};f(\cdot))$, produces the attention map. It computes the multiplicative attention of a pixel in the input $\bf X$ and a $p\times p$ patch of feature transformed by the $p\times p$ convolution.
The output of ${\rm DST_T}({\bf X}, {\bf X};f(\cdot))$ is then scaled and fed to the spiking neuron as the input current to generate the spiking attention map.
The spiking attention map is a binary attention map consisting of spikes. Each spike $s_{i,j}$ in this spiking attention map signifies attention between the patch $i$ (pixel $i$) and patch $j$.
The second one, ${\rm DST}({\rm AttnMap}({\bf X}), {\bf X};f(\cdot))$,  produces the output feature.
For each pixel, it computes the sum of features of patches that have attention to this pixel to form the output features.
In this way, the first DST is similar to the product of ${\bf QK}^{\rm T}$ in the VSA, and the second DST is similar to the product of attention map and ${\bf V}$ in the VSA.

\section{Experiment Details}

\noindent
\textbf{ImageNet Classification.}
ImageNet~\cite{deng2009imagenet} is a vast collection of static images and one of the most commonly used datasets in computer vision tasks. It consists of around 1.2 million high-resolution images, categorized into 1,000 distinct classes. Each class includes approximately 1,000 images, representing a diverse range of objects and scenes, making it an effective reflection of real-world scenarios.

For ImageNet classification experiments, we generally follow the data augmentation strategy and training setup in~\cite{yao2023spike}.
We use the standard preprocessing, i.e., data normalization, randomly crop and resize the input to 224$\times$224 during traning, and set the input size to 224$\times$224 and 288$\times$288 for inferince.
We employ the standard data augmentation methods including random augmentation, mixup, cutmix, and label smoothing\footnote{Implemented by \href{https://github.com/huggingface/pytorch-image-models}{PyTorch Image Models}}, similar to~\cite{yao2023spike}.
We use the AdamW optimizer with a weight decay of 0.01.
The batch size varies from 256 (SpikingResformer-Ti) to 128 (SpikingResformer-L) depending on the model size.
We train the models for 320 epochs with a cosine-decay learning rate whose initial value varies from 0.001 (SpikingResformer-Ti) to 0.0005 (SpikingResformer-L).

Since the scaling factors in DSSA require the firing rate of input $f_X$ and attention map $f_{Attn}$, we use an exponential moving average with a momentum of 0.999 to count the average firing rate during training, and use the average firing rate counted during training in inference.
We used the same method to count the average firing rate in all subsequent experiments.

\noindent
\textbf{Ablation Study.}
All the ablation experiments are conducted on the ImageNet100 dataset.
It is a subset of the ImageNet dataset consisting of 100 categories from the original ImageNet dataset.
The experimental setup basically follows the ImageNet classification experiments.
The weight decay is increased to 0.05 since the ImageNet100 is smaller and easy to overfit.

\noindent
\textbf{Transfer Learning on Static Image Datasets.}
We first perform transfer learning experiments on static image datasets CIFAR10 and CIFAR100~\cite{krizhevsky2009learning}.
The CIFAR-10 dataset comprises 60,000 samples, divided into 10 categories with 6,000 samples in each category. Each group has 5,000 training samples and 1,000 testing samples. The images in the dataset are colored and have a resolution of 32$\times$32 pixels. On the other hand, the CIFAR-100 dataset is an extension of the CIFAR-10 dataset, designed to provide a more challenging and diverse benchmark for image recognition algorithms. It contains 100 classes for classification, encompassing a broader range of objects and concepts than the CIFAR-10 dataset's limited set of 10 classes.

We finetune the SpikingResformer-Ti and Spiking\-Resformer-S pretrained in ImageNet classification on these datasets.
We first replace the 1000-FC classifier layer with a randomly initialized 10-FC (CIFAR10) or 100-FC (CIFAR100) layer.
We finetune the model for 100 epochs with an initial learning rate of $1\times10^{-4}$ and cosine-decay to $1\times10^{-5}$.
The batch size is set to 128.
We employ data augmentation methods including random augmentation, mixup, and label smoothing.
We use the AdamW optimizer with a weight decay of 0.01.

\noindent
\textbf{Transfer Learning on Neuromorphic Datasets.}
We also perform transfer learning experiments on neuromorphic dataset CIFAR10-DVS~\cite{li2017cifar10} and DVSGesture~\cite{amir2017low}.
The CIFAR10-DVS dataset~\citep{li2017cifar10} is created by converting the static images in CIFAR10. This is done by moving the images and capturing the movement using a dynamic vision sensor. The CIFAR10-DVS dataset consists of 10,000 samples, with 1,000 samples per category. Each sample is an event stream with a spatial size of 128$\times$128.
It is worth noting that the CIFAR10-DVS dataset does not have predefined training and test sets. In our experiments, we select the first 900 samples of each category for training and the last 100 for testing.

The DVSGesture~\cite{amir2017low} dataset is created by directly capturing the human gestures using the DVS128 dynamic vision sensor.
It has 1,342 instances of 11 hand and arm gestures. These gestures were grouped in 122 trials, performed by 29 subjects under 3 different lighting conditions. The dataset includes hand waving, arm rotations, air guitar, etc.

We use the following preprocessing procedure. Firstly, we divide the event stream into ten slices, each of which contains an equal number of events. Next, for each slice, we stack the events into a single frame consisting of three channels. These channels represent positive events, negative events, and all events. Finally, we use this frame as the input for that particular time step. In this way, we use a time step of 10 for these datasets.

We use the data augmentation technique proposed in~\citep{li2022neuromorphic}. Other settings follow the experiments of transfer learning on static image datasets.

\begin{table}[!t]
\centering
\caption{Further comparison with ANN version SpikingResformer. $\dagger$~means the input is enlarged to 288$\times$288 in inference.}
\resizebox{\linewidth}{!}
{
\begin{tabular}{cccccc}
\toprule
\multirow{2}{*}{\bf Model} & \multirow{2}{*}{\bf T} & {\bf Param} & {\bf OPs} & {\bf Energy} & {\bf Top-1 Acc.} \\
&&(M)&(G)&(mJ)&(\%)\\
\midrule
Resformer-Ti&1&11.14&4.07&18.72&78.37\\
\midrule
SpikingResformer-Ti&4&{11.14}&{2.73}/{4.71}${}^\dagger$&{2.46}/{4.24}${}^\dagger$&{74.34}/{75.57}${}^\dagger$\\
SpikingResformer-S&4&{17.76}&{3.74}/{6.40}${}^\dagger$&{3.37}/{5.76}${}^\dagger$&{75.95}/{76.90}${}^\dagger$\\
SpikingResformer-M&4&{35.52}&{6.07}/{10.24}${}^\dagger$&{5.46}/{9.22}${}^\dagger$&{77.24}/{78.06}${}^\dagger$\\
SpikingResformer-L&4&{60.38}&{9.74}/{16.40}${}^\dagger$&{8.76}/{14.76}${}^\dagger$&{78.77}/{79.40}${}^\dagger$\\

\bottomrule
\end{tabular}
}
\label{tab:compare with ANN}
\end{table}

\section{Further Comparison with ANN Version}
We compare SpikingResformer with its ANN version, called Resformer in the following.
To construct the ANN version of SpikingResformer, we replace the spiking neurons in SpikingResformer with ReLU activation, and replace the neurons in the attention map with Softmax function.
For a fair comparison, other modules and the overall architecture remain unchanged.
In addition, the experimental setup of Resformer is the same as SpikingResformer.
As shown in Tab.~\ref{tab:compare with ANN}, the Resformer-Ti achieves higher accuracy (78.37\% vs.\ 74.34\%) but consumes significantly more energy (18.72mJ vs. 2.46mJ), even surpassing the energy consumption of SpikingResformer-L (8.76mJ, 78.77\% acc.), demonstrating the energy efficiency advantage of SNNs.

\begin{table*}[!t]
\centering
\caption{\centering{Detailed comparison on static datasets.}}
\footnotesize
{
\begin{threeparttable}
\begin{tabular}{ccccccc}
\toprule
\multirow{2}{*}{\bf Method} & \multirow{2}{*}{\bf Type} & \multirow{2}{*}{\bf Architecture } & \multirow{2}{*}{\#{\bf Param}~(M)} & \multirow{2}{*}{\bf T} & \multicolumn{2}{c}{{\bf Top-1 Acc.\ }(\%)}\\
\cmidrule{6-7}
&&&&&CIFAR10 & CIFAR100\\
\midrule
\multirow{3}{*}{STBP-tdBN~\cite{zheng2021going}}
&\multirow{3}{*}{Direct Training}
&\multirow{3}{*}{ResNet-19}
&\multirow{3}{*}{12.54}
&2&92.34&-\\
&&&&4&92.92&-\\
&&&&6&93.16&-\\
\cmidrule{3-7}
PLIF~\cite{fang2021incorporating}&Direct Training&6 Conv, 2 FC&36.71&8&93.50&-\\
\cmidrule{3-7}
\multirow{3}{*}{Dspike~\cite{li2021differentiable}}
&\multirow{3}{*}{Direct Training}
&\multirow{3}{*}{ResNet-18}
&\multirow{3}{*}{11.21}
&2&93.13&71.68\\
&&&&4&93.66&73.35\\
&&&&6&94.25&74.24\\
\midrule
\multirow{3}{*}{Spikformer~\cite{zhou2023spikformer}}
&\multirow{3}{*}{Direct Training}
&Spikformer-4-256&4.13&4&93.94&75.96\\
&&Spikformer-2-384&5.74&4&94.80&76.95\\
&&Spikformer-4-384&9.28&4&95.19&77.86\\
\cmidrule{3-7}
\multirow{3}{*}{Spikingformer~\cite{zhou2023spikingformer}}
&\multirow{3}{*}{Direct Training}
&Spikingformer-4-256&4.13&4&94.77&77.43\\
&&Spikingformer-2-384&5.74&4&95.22&78.34\\
&&Spikingformer-4-384&9.28&4&95.61&79.09\\
\cmidrule{3-7}
\multirow{1}{*}{Spike-driven Transformer~\cite{yao2023spike}}
&\multirow{1}{*}{Direct Training}
&Spike-driven Transformer-2-512&10.21&4&95.6&78.4\\
\cmidrule{3-7}
\multirow{1}{*}{\bf SpikingResformer (Ours)}
&\multirow{1}{*}{Direct Training}
&SpikingResformer-Ti\tnote{*}&10.79&4& 96.24& 79.28\\
\midrule
\multirow{3}{*}{Spikformer~\cite{zhou2023spikformer}}
&\multirow{3}{*}{Transfer Learning}
&Spikformer-4-384&9.28&4&95.54&79.96\\
&&Spikformer-8-384&16.36&4&96.64&82.09\\
&&Spikformer-8-512&29.08&4&97.03&83.83\\
\cmidrule{3-7}
\multirow{2}{*}{\bf SpikingResformer (Ours)}
&\multirow{2}{*}{Transfer Learning}
&SpikingResformer-Ti&10.76&4& 97.02& 84.53\\
&&SpikingResformer-S&17.25&4&\bf 97.40&\bf 85.98\\

\bottomrule
\end{tabular}

\begin{tablenotes}
\item[*] The stem structure differs from the original SpikingResformer-Ti in the main text.
\end{tablenotes}
\end{threeparttable}
}
\label{tab:transfer static}
\end{table*}

\begin{table*}[!t]
\centering
\caption{\centering{Detailed comparison on neuromorphic datasets.}}
\footnotesize
{
\begin{threeparttable}
\begin{tabular}{ccccccc}
\toprule
\multirow{2}{*}{\bf Method} & \multirow{2}{*}{\bf Type} & \multirow{2}{*}{\bf Archtecture } & \multirow{2}{*}{\#{\bf Param}~(M)} & \multirow{2}{*}{\bf T} & \multicolumn{2}{c}{{\bf Top-1 Acc.\ }(\%)}\\
\cmidrule{6-7}
&&&&&CIFAR10-DVS & DVSGesture\\
\midrule
\multirow{2}{*}{STBP-tdBN~\cite{zheng2021going}}
&\multirow{2}{*}{Direct Training}
&ResNet-19&12.54&10&67.8&-\\
&&ResNet-17&1.40&40&-&96.87\\
\cmidrule{3-7}
\multirow{2}{*}{PLIF~\cite{fang2021incorporating}}
&\multirow{2}{*}{Direct Training}
&5 Conv, 2 FC&17.22&20&74.8&-\\
&&6 Conv, 2 FC&1.69&20&-&97.57\\
\cmidrule{3-7}
Dspike~\cite{li2021differentiable}&Direct Training&ResNet-18&11.21&10&75.4&-\\
\midrule
\multirow{2}{*}{Spikformer~\cite{zhou2023spikformer}}&\multirow{2}{*}{Direct Training}&\multirow{2}{*}{Spikformer-2-256}&\multirow{2}{*}{2.55}&10&78.6&95.8\\
&&&&16&80.6&97.9\\
\cmidrule{3-7}
\multirow{2}{*}{Spikingformer~\cite{zhou2023spikingformer}}&\multirow{2}{*}{Direct Training}&\multirow{2}{*}{Spikingformer-2-256}&\multirow{2}{*}{2.55}&10&79.9&96.2\\
&&&&16&81.3&98.3\\
\cmidrule{3-7}
\multirow{1}{*}{Spike-driven Transformer~\cite{yao2023spike}}
&\multirow{1}{*}{Direct Training}
&Spike-driven Transformer-2-256&2.55&16&80.0&\bf 99.3\\
\cmidrule{3-7}
\multirow{2}{*}{\bf SpikingResformer (Ours)}
&\multirow{2}{*}{Direct Training}
&SpikingResformer-Ti\tnote{*}&10.79&10&81.5& -\\
&&SpikingResformer-XTi\tnote{*}&2.71&16& -& 98.6\\
\midrule
\multirow{2}{*}{\bf SpikingResformer (Ours)}
&\multirow{2}{*}{Transfer Learning}
&SpikingResformer-Ti&10.76&10& 84.7& 93.4\\
&&SpikingResformer-S&17.25&10&\bf 84.8& 93.4\\

\bottomrule
\end{tabular}

\begin{tablenotes}
\item[*] The stem structure differs from the original SpikingResformer-Ti in the main text.
\end{tablenotes}
\end{threeparttable}
}
\label{tab:transfer neuromorphic}
\end{table*}

\section{Further Comparison of Direct Training Results}
In the main text, we perform only transfer learning experiments on small-scale datasets including CIFAR10, CIFAR100, CIFAR10-DVS, and DVSGesture. In this section, we perform direct training experiments on these datasets.
To adapt to the small-size inputs, we replace the 7$\times$7 convolution with a 3$\times$3 convolution and remove the 3$\times$3 max pooling.
Other modules remain unchanged.
In addition, the minimum SpikingResformer-Ti is still too large for the DVSGesture dataset.
Therefore, we halved the number of channels in each layer to avoid overfitting, thus constructing SpikingResformer-XTi.
As shown in Tab.~\ref{tab:transfer static} and Tab.~\ref{tab:transfer neuromorphic}, our method is consistently effective when directly trained on small-scale datasets.
For instance, SpikingResformer-Ti achieves 96.24\% accuracy when directly trained on CIFAR10 dataset and achieves 79.28\% accuracy on CIFAR100 dataset, outperforming state-of-the-art method Spikingformer-4-384 by 0.63\% on CIFAR10 dataset and 0.19\% on CIFAR100 dataset.
For neuromorphic datasets, SpikingResformer-Ti achieves 81.5\% accuracy on CIFAR10-DVS dataset, outperforming Spikingformer-2-256 by 0.2\%. SpikingResformer-XTi achieves 98.6\% accuracy on DVSGesture dataset, outperforming Spikingformer-2-256 by 0.3\%, and competitive with 99.3\% accuracy of Spike-driven Transformer-2-256.

\section{Detailed Comparison of Transfer Learning Results}

\noindent
\textbf{Static Image Datasets.}
As shown in Tab.~\ref{tab:transfer static}, SpikingResformer outperforms other transfer learning methods on CIFAR10 and CIFAR100 datasets with fewer parameters.
The SpikingResformer-Ti achieves 84.53\% accuracy on the CIFAR100 dataset, outperforming Spikformer-8-512 by 0.7\% with only 11.14M parameters.
Moreover, the SpikingResformer-S achieves 85.98\% accuracy on the CIFAR100 dataset, which is the state-of-the-art result and outperforms Spikformer-8-512 by 2.15\%.
Compared to direct training methods, the SpikingResformer obtained from transfer learning has significantly higher performance.
The SpikingResformer-Ti outperforms the Spiking Transformer-2-512 by 6.1\% with a comparable number of parameters.
This demonstrates the advantage of transfer learning.

\noindent
\textbf{Neuromorphic Datasets.}
Since the existing spiking vision transformer does not perform transfer learning experiments on neuromorphic datasets, we mainly compare with direct training methods.
However, since the size of the models trained directly on the CIFAR10-DVS and DVSGesture is typically much smaller than the models pre-trained on ImageNet, we are not able to compare them to models with comparable parameters.
As shown in Tab.~\ref{tab:transfer neuromorphic}, the SpikingResformer obtained from transfer learning has significantly higher performance on CIFAR10-DVS.
The SpikingResformer-Ti achieves 84.7\% accuracy on CIFAR10-DVS, outperforming Spiking Transformer-2-256 by 4.7\% and outperforming Spikingformer-2-256 by 3.4\%.
However, the transfer learning results on DVSGesture fail to achieve comparable performance to direct training.
SpikingResformer only achieves 93.4\% accuracy on DVSGesture, falling behind the state-of-the-art method Spike-driven Transformer by 5.9\%.
We believe that this is mainly due to the way CIFAR10-DVS is constructed differs from DVSGesture.
CIFAR10-DVS is converted from CIFAR10 using a dynamic vision sensor, which does not contain temporal information.
Thus, models pre-trained on static datasets can transfer to CIFAR10-DVS well.
However, DVSGesture is directly created from human gestures, which contain rich temporal information. As a result, models pre-trained on static datasets do not transfer well to DVSGesture.

\end{document}